\theoremstyle{plain}
\newtheorem{theorem}{Theorem}[section]
\newtheorem{proposition}[theorem]{Proposition}
\theoremstyle{definition}
\theoremstyle{remark}
\newtheorem{remark}[theorem]{Remark}
\def\eqref#1{equation~\ref{#1}}
\def\1{\bm{1}}
\def\vb{{\bm{b}}}
\def\vc{{\bm{c}}}
\def\vf{{\bm{f}}}
\def\vt{{\bm{t}}}
\def\vw{{\bm{w}}}
\def\vx{{\bm{x}}}
\def\vy{{\bm{y}}}
\def\vz{{\bm{z}}}
\def\mE{{\bm{E}}}
\def\mK{{\bm{K}}}
\def\mM{{\bm{M}}}
\def\mP{{\bm{P}}}
\def\mQ{{\bm{Q}}}
\def\mV{{\bm{V}}}
\def\mW{{\bm{W}}}
\def\mX{{\bm{X}}}
\DeclareMathAlphabet{\mathsfit}{\encodingdefault}{\sfdefault}{m}{sl}
\SetMathAlphabet{\mathsfit}{bold}{\encodingdefault}{\sfdefault}{bx}{n}
\def\gL{{\mathcal{L}}}
\def\gO{{\mathcal{O}}}
\def\sR{{\mathbb{R}}}
\providecommand{\eg}{\textit{e.g.}\@\xspace}
\providecommand{\ie}{\textit{i.e.}\@\xspace}
\Crefname{section}{Section}{Sections}
\crefname{section}{Sec.}{Sec.}
\Crefname{table}{Table}{Tables}
\crefname{table}{Tab.}{Tab.}
\Crefname{figure}{Figure}{Figures}
\crefname{figure}{Fig.}{Fig.}
\Crefname{equation}{Equation}{Equations}
\crefname{equation}{Eq.}{Eq.}
\def\oP{{\operatorname{P}}}
\definecolor{mplgreen}{rgb}{0.17254901960784313, 0.6274509803921569, 0.17254901960784313}
\definecolor{myblue}{rgb}{0.2, 0.2, 0.9}
\definecolor{myred}{rgb}{1.0, 0.1, 0.1}
\rrcurl\url{https://pytorch.org/vision/stable/generated/torchvision.transforms.RandomResizedCrop.html}
\imageneturl\url{https://storage.googleapis.com/bit_models/imagenet21k_wordnet_lemmas.txt}
\cifarurl\url{https://www.cs.toronto.edu/%7Ekriz/cifar.html}
\urldef\adaptformerurl\url{https://github.com/ShoufaChen/AdaptFormer/blob/main/models/adapter.py}

\newcolumntype{L}[1]{>{\raggedright\arraybackslash}p{#1}}
\newcolumntype{R}[1]{>{\raggedleft\arraybackslash}p{#1}}
\newcolumntype{C}[1]{>{\centering\arraybackslash}p{#1}}

\captionsetup[figure]{aboveskip=0.05in, belowskip=0.05in}
\captionsetup[table]{aboveskip=0.05in, belowskip=0.05in}

\allowdisplaybreaks

\def\algo{{LIFT+}}

%
\ifCLASSOPTIONcompsoc
  \usepackage[nocompress]{cite}
\else
  \usepackage{cite}
\fi
%

%
\ifCLASSINFOpdf
\else
\fi


\hyphenation{op-tical net-works semi-conduc-tor IEEE-Xplore}

\begin{document}
%
\title{\algo: Lightweight Fine-Tuning for Long-Tail Learning}

%
%
\author{Jiang-Xin Shi,
        Tong Wei,
        and Yu-Feng Li,~\IEEEmembership{Senior Member,~IEEE}
\IEEEcompsocitemizethanks{%
\IEEEcompsocthanksitem Jiang-Xin Shi and Yu-Feng Li are with the National Key Laboratory for Novel Software Technology, Nanjing University, Nanjing 210023, China, and the School of Artificial Intelligence, Nanjing University, Nanjing 210023, China.\protect\\
E-mail: \{shijx, liyf\}@lamda.nju.edu.cn.
\IEEEcompsocthanksitem Tong Wei is with the School of Computer Science and Engineering, Southeast University, Nanjing 210096, China, and the Key Laboratory of Computer Network and Information Integration, Southeast University, Ministry of Education, China.\protect\\
E-mail: weit@seu.edu.cn.
}
\thanks{%
Manuscript received XXXX, 20XX; revised XXXX, 20XX.\\
(Corresponding author: Yu-Feng Li.)
}}

%
%

\markboth{IEEE TRANSACTIONS ON PATTERN ANALYSIS AND MACHINE INTELLIGENCE, VOL. XX, NO. X, XXXX 20XX}%
{Shi \MakeLowercase{\textit{et al.}}: \algo: Lightweight Fine-Tuning for Long-Tail Learning}
%


\IEEEtitleabstractindextext{%
\begin{abstract}
\justifying\let\raggedright\justifying
The fine-tuning paradigm has emerged as a prominent approach for addressing long-tail learning tasks in the era of foundation models. However, the impact of fine-tuning strategies on long-tail learning performance remains unexplored. In this work, we disclose that existing paradigms exhibit a profound misuse of fine-tuning methods, leaving significant room for improvement in both efficiency and accuracy.
Specifically, we reveal that heavy fine-tuning (fine-tuning a large proportion of model parameters) can lead to non-negligible performance deterioration on tail classes, whereas lightweight fine-tuning demonstrates superior effectiveness. Through comprehensive theoretical and empirical validation, we identify this phenomenon as stemming from inconsistent class conditional distributions induced by heavy fine-tuning.
Building on this insight, we propose \algo, an innovative lightweight fine-tuning framework to optimize consistent class conditions. Furthermore, \algo\ incorporates semantic-aware initialization, minimalist data augmentation, and test-time ensembling to enhance adaptation and generalization of foundation models.
Our framework provides an efficient and accurate pipeline that facilitates fast convergence and model compactness. Extensive experiments demonstrate that \algo\ significantly reduces both training epochs (from $\sim$100 to $\leq$15) and learned parameters (less than 1\%), while surpassing state-of-the-art approaches by a considerable margin.
The source code is available at \url{https://github.com/shijxcs/LIFT-plus}.
\end{abstract}

\begin{IEEEkeywords}
Long-tail learning, foundation model, class-imbalanced learning, vision-language model.
\end{IEEEkeywords}}

\let\oldtwocolumn\twocolumn
\renewcommand\twocolumn[1][]{%
\oldtwocolumn[{#1}{
    \vspace{-0.3in}
    \centering
    \includegraphics[width=0.325\linewidth]{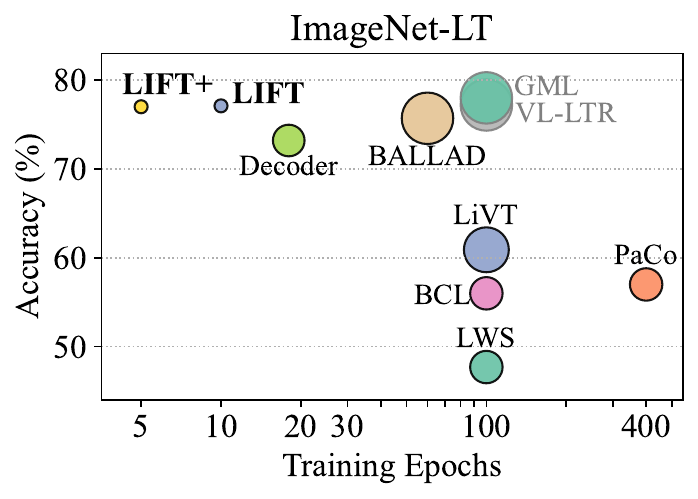}
    \hfill
    \includegraphics[width=0.325\linewidth]{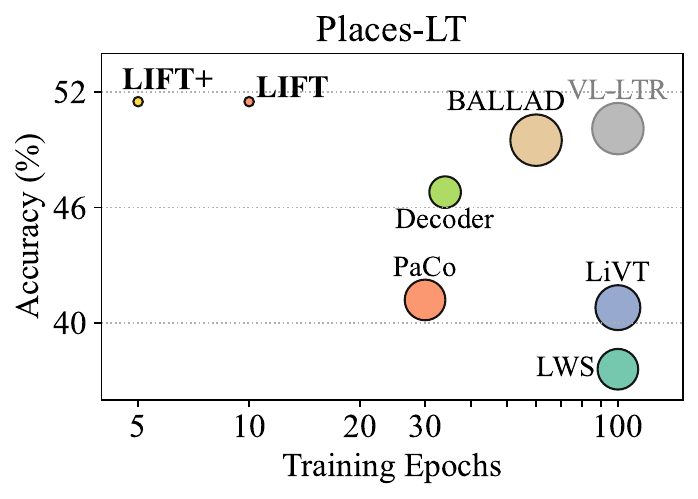}
    \hfill
    \includegraphics[width=0.325\linewidth]{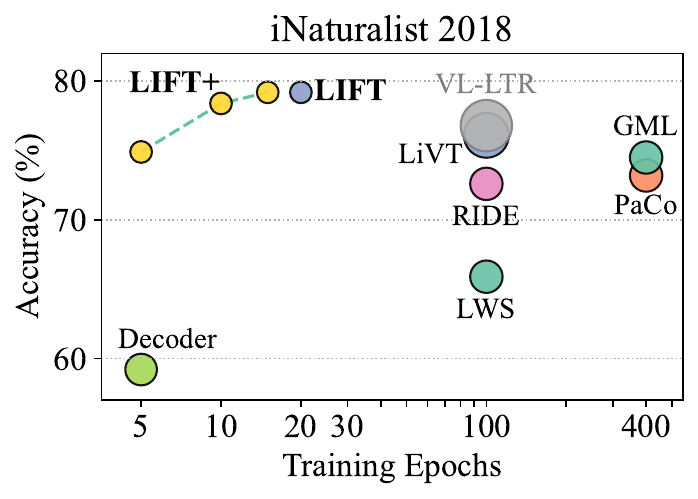}
    \captionof{figure}{Comparison of \algo\ with state-of-the-art methods (using CLIP as foundation model). The x-axis indicates the number of training epochs, and the y-axis represents the test accuracy. The size of each marker reflects the quantity of learned parameters. Gray labels denote methods that utilize external data. \algo\ consistently achieves higher performance with lower costs and is even comparable with methods that leverage external data.}
    \vspace{0.3in}
    \label{fig:sota}
}]
}

\maketitle

\IEEEdisplaynontitleabstractindextext

%
\IEEEpeerreviewmaketitle

\section{Introduction}\label{sec:introduction}

\IEEEPARstart{L}{ong-tail} learning tackles the challenge of learning from highly imbalanced data, where a small subset of dominant classes (head classes) possess abundant training samples, while the remaining classes (tail classes) have only limited training samples. The inherent challenge posed by long-tail data is prevalent across various scenarios, such as image classification \cite{wang2017learning,van2018inaturalist,liu2019large}, instance segmentation \cite{gupta2019lvis,wang2020devil,wang2021seesaw}, and object detection \cite{li2020overcoming,wang2021adaptive,li2022equalized}. This problem has attracted considerable research attention, leading to numerous approaches proposed to enhance generalization, particularly for tail classes. Existing approaches can be categorized into three main paradigms: \romannumeral 1) data manipulation techniques \cite{zhou2020bbn,chou2020remix,yang2020rethinking,he2021distilling,park2022majority,shi2023re,ahn2023cuda,gao2023enhancing}, \romannumeral 2) representation learning strategies \cite{kang2021exploring,wang2021contrastive,cui2021parametric,zhu2022balanced,liu2022selfsupervised,yang2022inducing,peifeng2023feature,ma2023curvature,kukleva2023temperature,gao2024distribution}, and \romannumeral 3) model output adjustment mechanisms \cite{cao2019learning,ren2020balanced,menon2021longtail,hong2021disentangling,zhang2021distribution,samuel2021distributional,wei2022robust,han2023wrapped}. Despite the considerable progress achieved by these methods, a significant performance gap persists compared to models trained on rich data, indicating room for substantial improvement.

Recent studies in long-tail learning have shown that fine-tuning foundation models such as CLIP \cite{radford2021clip} and ViT \cite{dosovitskiy2021an} can substantially improve performance, eliminating the necessity of training deep neural networks from scratch.
For instance, BALLAD \cite{ma2021simple}, VL-LTR \cite{tian2022vl}, Wang et al. \cite{wang2023exploring} employ the CLIP model, while RAC \cite{long2022retrieval} and LPT \cite{dong2023lpt} adopt the ViT model pre-trained on ImageNet-21K. However, although these works introduce a new paradigm for long-tail learning, they fail to provide a systematic analysis of the fine-tuning strategies under long-tail scenarios. This oversight potentially leads to suboptimal methodologies and unnecessary computational overhead. Specifically, existing approaches typically come at the cost of \romannumeral 1) prolonged training durations ($\approx 100$ epochs); \romannumeral 2) a two-stage procedure, which limits the deployment flexibility; \romannumeral 3) large external datasets ($\approx 10^6$ samples) to facilitate the training process.

To bridge this gap, we first provide a comprehensive analysis of fine-tuning strategies for long-tail learning. Our empirical evidence reveals that {heavy fine-tuning} (fine-tuning a large proportion of model parameters) can lead to \textit{non-negligible performance deterioration} on tail classes. Through systematic investigation, we identify that full fine-tuning distorts the intrinsic intra-class distance distributions. Theoretically, we prove that such distortions break the consistency assumption of class-conditional distributions, consequently resulting in biased predictions.
Building upon these theoretical and empirical findings, we recognize that optimizing a small proportion of pre-trained weights could simultaneously enhance model discriminability while preserving the intra-class distributions.
Motivated by this insight, we propose an efficient and accurate long-tail learning framework named \textit{LIghtweight Fine-Tuning} (\algo). The proposed approach facilitates rapid convergence and compact models through adaptive lightweight fine-tuning.

\algo\ is a single-stage framework that usually achieves convergence in fewer than 15 training epochs without requiring external data. Furthermore, we improve \algo\ with semantic-aware classifier initialization, which provides a robust starting point for the model, enabling better generalization while maintaining computational efficiency.
\algo\ also introduces a minimalist data augmentation strategy to eliminate trivial augmentation approaches and further accelerate the training period. 
Additionally, we incorporate test-time ensembling to mitigate the inherent limitations of foundation models and boost generalization capability.
\Cref{fig:sota} gives a performance comparison on three typical long-tail datasets, ImageNet-LT, Places-LT, and iNaturalist 2018. Notably, \algo\ consistently outperforms state-of-the-art methods without auxiliary data by an average margin of 2.1\% in accuracy, while requiring substantially fewer learned parameters and reduced training epochs. 

In conclusion, the contributions of this paper can be summarized as follows: 
\begin{itemize}
\item We first reveal the critical limitation of heavy fine-tuning that distorts the tail-class performance;
\item Through empirical and theoretical validation, we discover that optimizing a small proportion of parameters can effectively alleviate the performance degradation;
\item We propose \algo, an efficient and accurate long-tail learning framework that leverages lightweight fine-tuning with minimal computational overhead;
\item Extensive experiments demonstrate that \algo\ consistently outperforms state-of-the-art methods with lower computational costs on multiple benchmarks (an average of 2.1\% accuracy improvements, with fewer than 15 training epochs and less than 1\% learned parameters).
\end{itemize}

This paper presents an extended version of our previous work, LIFT, which was initially proposed at ICML 2024 \cite{shi2024longtail}. The current version offers four major advancements. First, we introduce \algo, a more systematic framework that addresses long-tail learning challenges through input processing, representation enhancement, and output optimization. This unified framework establishes a versatile research pipeline for future studies. Second, we propose a novel minimalist data augmentation strategy that fundamentally overcomes the limitations of conventional approaches, reducing training time by 5 epochs (25-50\% of the original training cost) while maintaining superior performance. Third, we extend our analysis to include more foundation models as baselines, specifically the ImageNet-21K pre-trained ViT model. We also conduct more comprehensive empirical studies to verify the advantages of our framework. Last but not least, we reconstruct our source code by rectifying some mistakes and further improving the computational efficiency, enabling full reproducibility using a single NVIDIA 4090 GPU with only 24GB of memory. We rerun all experiments and find that the performance is further improved in the updated pipeline.

\section{Related Work}
\textbf{Long-Tail Learning via Deep Learning.} Traditional approaches typically employ convolutional neural networks such as ResNet and ResNeXt for long-tail learning \cite{yang2022survey,zhang2023deep}. To address the long-tail challenge, existing methods follow three primary directions:
\romannumeral 1) data manipulation \cite{zhou2020bbn,chou2020remix,yang2020rethinking,he2021distilling,park2022majority,shi2023re,ahn2023cuda,gao2023enhancing}, which involves designing re-sampling and data augmentation strategies. Interestingly, Shi et al. \cite{shi2023re} discover that re-sampling is sensitive to irrelevant context and may not always help enhance representation. Ahn et al. \cite{ahn2023cuda} find that data augmentation may adversely affect the augmented class while benefiting the non-augmented classes.
\romannumeral 2) representation learning \cite{kang2021exploring,wang2021contrastive,cui2021parametric,zhu2022balanced,liu2022selfsupervised,yang2022inducing,peifeng2023feature,ma2023curvature,kukleva2023temperature,gao2024distribution}, which focuses on improving feature extraction. This direction incorporates various advanced techniques employed, such as supervised contrastive learning \cite{kang2021exploring,wang2021contrastive,cui2021parametric,zhu2022balanced}, self-supervised learning \cite{liu2022selfsupervised,kukleva2023temperature}, and neural collapse \cite{yang2022inducing,peifeng2023feature,gao2024distribution}. 
\romannumeral 3) model output adjustment \cite{cao2019learning,ren2020balanced,menon2021longtail,hong2021disentangling,zhang2021distribution,samuel2021distributional,wei2022robust,han2023wrapped}, which optimizes unbiased loss functions during training or applies post-hoc calibrations. 
Moreover, ensembling learning methods \cite{xiang2020learning,wang2021longtailed,cui2022reslt,zhang2022self,shi2024residual} aim to combine multiple diverse experts and optimization objectives to improve both head and tail classes.
Additionally, many studies adopt a two-stage training paradigm \cite{kang2020decoupling,zhong2021improving,wei2023towards,nam2023decoupled}, in which the first stage learns representations and the second stage learns the classifier .
In contrast to the aforementioned works, our work presents an end-to-end training approach that leverages the strengths of foundation models. Moreover, we propose novel techniques to enhance input processing, feature representation, and output optimization within a unified framework, providing a versatile and extensible pipeline for future research.

\textbf{Long-Tail Learning via Foundation Model.} Fine-tuning foundation models such as CLIP \cite{radford2021clip} and ViT \cite{dosovitskiy2021an} has attracted widespread attention \cite{steiner2022how,zhou2022learning,zhou2022conditional,yu2023visual,zhou2024decoop}, and has emerged as an effective strategy to mitigate class imbalance by leveraging their powerful representation learning capabilities \cite{ma2021simple,long2022retrieval,tian2022vl,iscen2023improving,dong2023lpt,xia2023lmpt,he2023uniformly,song2023long,wang2023exploring,li2024rectify,shi2024longtail}.
For instance, BALLAD \cite{ma2021simple} first employs full fine-tuning for the foundation model, then freezes the backbone and optimizes a linear adapter on re-sampled data.
VL-LTR \cite{tian2022vl} first incorporates auxiliary image-text web data to fine-tune the vision-language models, then freezes the text encoder, and optimizes the image encoder along with a language-guided head.
RAC \cite{long2022retrieval} jointly trains an encoder with a retrieval module to leverage external datasets (\eg, ImageNet-21K) for input augmentation.
LPT \cite{dong2023lpt} adopts a two-phase prompt learning methodology to adapt the foundation model.
Wang et al. \cite{wang2023exploring} propose to add a decoder after the pre-trained model to extract relevant features.
Although various approaches have been proposed for adapting foundation models, current research lacks a systematic analysis regarding the impact of fine-tuning strategies on long-tail learning. 
Moreover, it is important to note that existing methods typically suffer from prolonged training durations and often rely on auxiliary external data.
In contrast, our proposed approach exhibits a remarkable capability to achieve fast convergence without requiring external data. 
Furthermore, our method is versatile and inclusive, allowing for seamless integration with different lightweight fine-tuning methods and various backbones.

\begin{figure*}[!t]
    \centering
    \subfloat[]{
        \includegraphics[width=0.37\linewidth, trim=0 9 0 0]{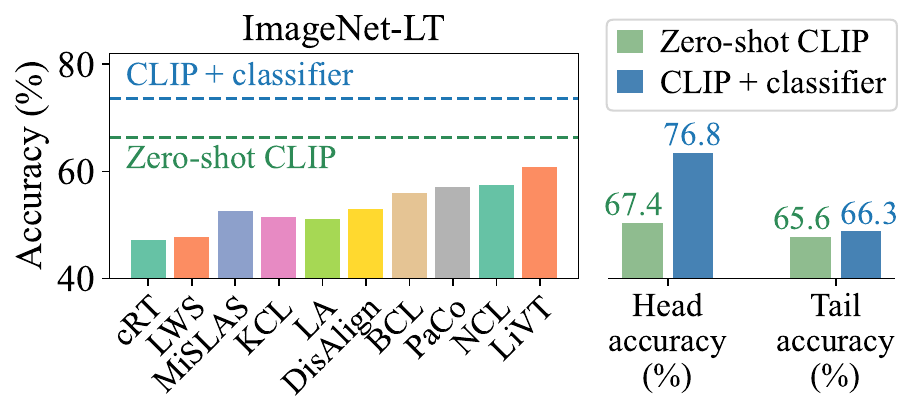}
        \label{fig:zsclip-a}
    }
    \hfill
    \subfloat[]{
        \includegraphics[width=0.352\linewidth, trim=0 9 0 0]{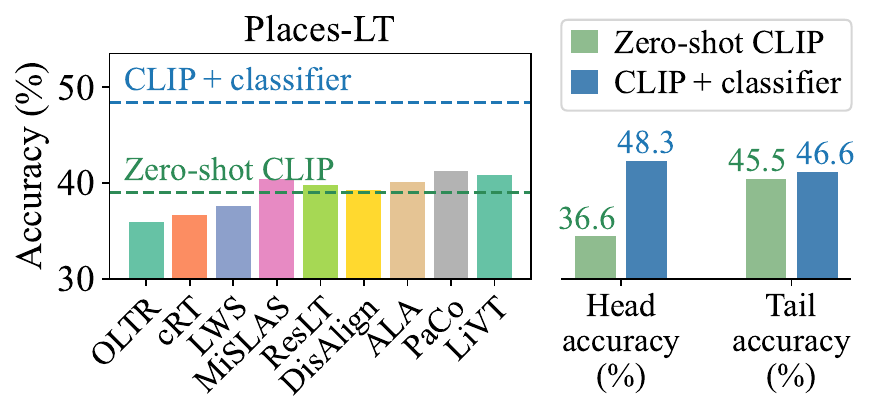}
        \label{fig:zsclip-b}
    }
    \hfill
    \subfloat[]{
        \includegraphics[width=0.202\linewidth, trim=0 0 0 0]{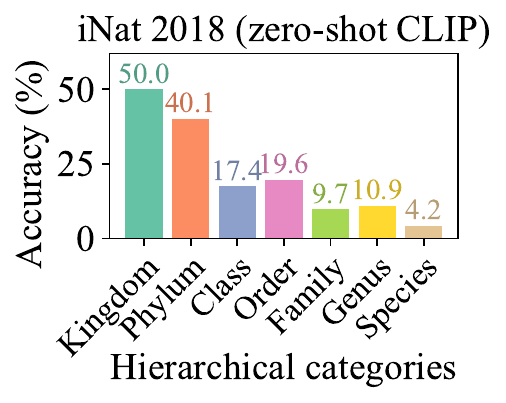}
        \label{fig:zsclip-c}
    }
    \caption{(a-b) On ImageNet-LT and Places-LT, zero-shot CLIP has surpassed many prior methods. By simply introducing an additional classifier, the accuracy further increases. However, the improvements mainly come from the head classes, while the tail classes only achieve marginal enhancements. (c) On iNaturalist 2018, zero-shot CLIP encounters challenges in achieving high accuracy for fine-grained long-tail categories.}
    \label{fig:zsclip}
\end{figure*}

\section{How Fine-Tuning Affects Long-Tail}

\subsection{Preliminary}
Unlike conventional neural networks, foundation models that employ the Transformer architecture \cite{vaswani2017attention,dosovitskiy2021an} are more elegantly designed while exhibiting exceptional generalization capabilities. It has demonstrated remarkable adaptability across various computer vision \cite{dosovitskiy2021an,he2022masked} and natural language processing tasks \cite{vaswani2017attention,kenton2019bert}.
Taking the image classification task as an example, given an image $\vx$, it first partitions the image into multiple patches, then feeds the patches into an extractor $\phi$ to obtain the corresponding feature $\phi(\vx)$. For a downstream $K-$classification task, the model computes the predicted logits for each class as $z_{k}=\phi(\vx)\vw_k^{\top}+b_k$, where $\vw_k$ and $b_k$ denote task-specific classifier weights and bias.
Due to space constraints, the comprehensive architecture and inference processes of foundation models are presented in \Cref{sec:tech_detail}.

The recent vision-language foundation model, CLIP, further underscores its efficacy by demonstrating impressive zero-shot performance \cite{radford2021clip}.
Given an image $\vx$ and considering $K$ candidate classes, CLIP first generates $K$ textual prompts $\vt_1,\cdots,\vt_K$, each representing a descriptive phrase, such as ``a photo of a cat'' or ``a photo of a dog''. It then extracts the image feature $\phi(\vx)$ and the prompt features $\psi(\vt_1),\cdots,\psi(\vt_K)$. To predict the label for the given image, CLIP compares the cosine similarity between the image and each class prompt:
\begin{equation}
\label{eq:clip}
    y_{\text{pred}}=\operatorname*{arg\  max}\limits_{k\in[K]} \langle\phi(\vx)\mP_I,\psi(\vt_k)\mP_T\rangle
\end{equation}
where $\langle\cdot,\cdot\rangle$ denotes cosine similarity. $\mP_I$ and $\mP_T$ are projection matrices that map the extracted features to a shared latent space with consistent dimensionality.

\subsection{Long-Tail Learning with Foundation Model}
We first evaluate the performance of CLIP on typical long-tail datasets, as illustrated in \Cref{fig:zsclip-a,fig:zsclip-b,fig:zsclip-c}. The results reveal that zero-shot CLIP outperforms most conventional methods. Furthermore, by freezing the backbone and training an additional classifier, the performance can be further improved, which validates the high-quality representations learned by CLIP. However, the majority of performance gains are dominated by the head classes, while the tail classes exhibit only marginal improvements.

Moreover, while zero-shot CLIP exhibits impressive performance in general scenarios, its effectiveness diminishes on specialized long-tail datasets. A notable case is the iNaturalist 2018 dataset, which presents a fine-grained long-tail challenge with a hierarchical categorization spanning from 7 kingdoms to 8142 species. Although zero-shot CLIP achieves high accuracy in predicting coarse-grained categories (\eg, ``kingdom'' and ``phylum''), it performs poorly in classifying fine-grained species, \ie, long-tail classes.

\subsection{Heavy Fine-Tuning Hurts}

Although the foundation model has shown commendable performance in downstream long-tail learning tasks, it has several limitations. As discussed previously, while CLIP achieves significant improvements on head classes, its performance gains on tail classes remain suboptimal. This observation naturally raises a critical question: \textit{Is the current fine-tuning strategy insufficient?}

\begin{figure}[!t]
\centering
\includegraphics[width=0.456\linewidth]{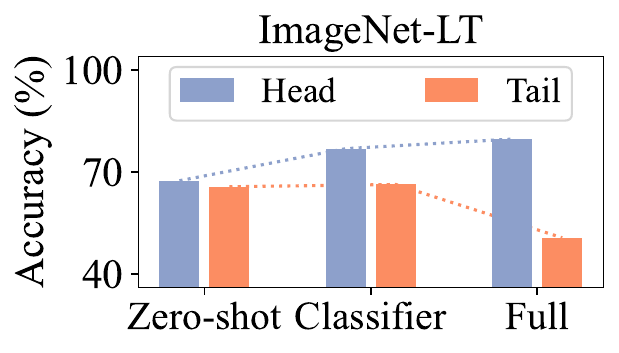}
\hspace{0.1in}
\includegraphics[width=0.437\linewidth]{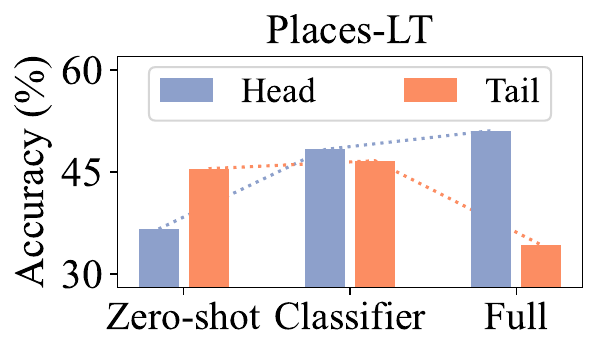}
\caption{Comparison of different fine-tuning manners. Full fine-tuning improves head-class accuracy but severely decreases tail-class performance, even when employing balanced loss and classifier initialization.}
\label{fig:deterioration}
\end{figure}

However, the results in \Cref{fig:deterioration} reveal a concerning finding: full fine-tuning enhances head-class accuracy while at the expense of degrading tail-class performance. Note that we have optimized the balanced logit-adjusted loss \cite{menon2021longtail} and applied a balanced classifier initialization (which will be introduced in \Cref{sec:sai}), the prediction bias still persists. Most strikingly, the performance deterioration of tail classes renders the overall accuracy of full fine-tuning even worse than that of classifier fine-tuning (71.6\% vs. 73.6\% on ImageNet-LT, and 46.6\% vs. 48.4\% on Places-LT).

\begin{figure}[!t]
\setlength{\tabcolsep}{0.2ex}
\subfloat[Classifier fine-tuning.]{
    \begin{tabular}{cc}
    \includegraphics[width=0.24\linewidth]{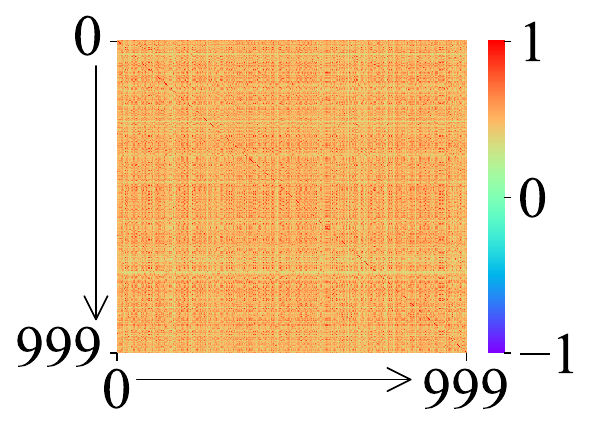} &
    \includegraphics[width=0.21\linewidth]{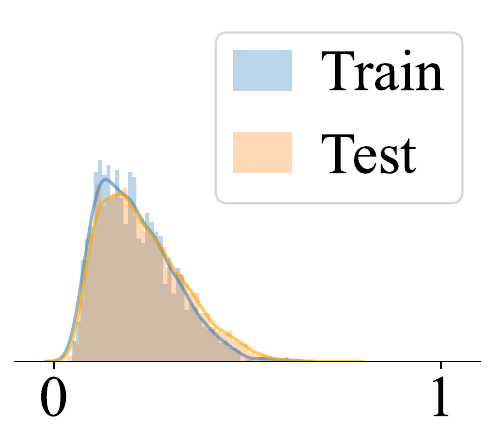} \\
    \color{red}{\fontsize{7.5pt}{7.5pt}\selectfont Head acc.: 76.8\%} &
    \color{mplgreen}{\fontsize{7.5pt}{7.5pt}\selectfont Tail acc.: 66.3\%} 
    \end{tabular}
    \label{fig:feature-classifier}
}
\subfloat[Full fine-tuning.]{
    \begin{tabular}{cc}
    \includegraphics[width=0.24\linewidth]{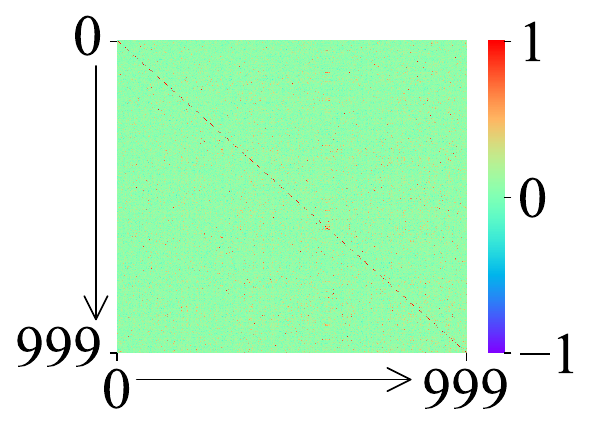} &
    \includegraphics[width=0.21\linewidth]{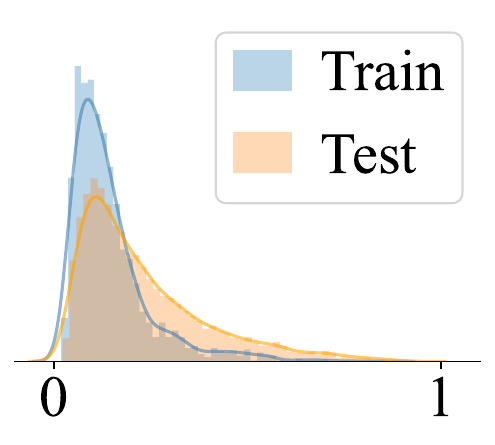} \\
    \color{mplgreen}{\fontsize{7.5pt}{7.5pt}\selectfont Head acc.: 79.7\%} &
    \color{red}{\fontsize{7.5pt}{7.5pt}\selectfont Tail acc.: 50.6\%} 
    \end{tabular}
    \label{fig:feature-full}
}
\caption{Inter-class feature similarities (heatmaps) and intra-class distance distributions from tail classes (histograms) on ImageNet-LT. Classifier fine-tuning limits head-class performance due to high inter-class similarities. Full fine-tuning optimizes inter-class similarities but leads to inconsistent distribution between train and test data on tail classes.}
\label{fig:ft-feature}
\end{figure}

To uncover the reasons behind it, we identify that full fine-tuning may distort the representation of tail classes. To validate this, we quantify the model representation from two perspectives: \romannumeral 1) \textit{inter-class feature similarities}, which is measured by the cosine similarities between class mean features; and \romannumeral 2) \textit{intra-class distance distributions}, which is characterized by the cosine similarities between individual samples and their corresponding class mean features. This approach is adopted since the class-conditional distribution cannot be directly observed.
We compute these metrics and present the results in \Cref{fig:ft-feature}. Notably, \Cref{fig:feature-full} reveals that full fine-tuning enhances feature discriminability by reducing inter-class similarities to approximately zero, rendering the features of different classes nearly orthogonal. However, it also distorts intra-class distributions, leading to a distribution shift between train and test data on tail classes. Consequently, using the fine-tuned model to estimate tail-class data will inevitably result in an underestimated class-conditional probability. We provide further theoretical evidence to analyze this matter.
\begin{proposition}
\label{prop:ccd}
The underestimated class-conditional probability $\oP(\phi(\vx)\mid y=j)$ leads to an underestimated loss on class $j$ and a biased prediction towards other classes.
\end{proposition}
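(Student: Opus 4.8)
The plan is to route everything through Bayes' rule together with the fact that minimizing the balanced logit-adjusted loss recovers the class-conditional likelihoods up to a class-independent factor, so that ``$\oP(\phi(\vx)\mid y=j)$ being underestimated'' is literally an error in what the trained classifier represents. First I would record what the loss fits: at the population minimizer of the balanced logit-adjusted objective the logits satisfy $z_k=\log\hat{\oP}(\phi(\vx)\mid y=k)+c(\vx)$ for a class-independent $c(\vx)$, where $\hat{\oP}(\cdot\mid y=k)$ denotes the class-conditional the model has effectively learned; equivalently its balanced-test posterior is
\[
\hat q(y=j\mid\phi(\vx))=\frac{\hat{\oP}(\phi(\vx)\mid y=j)}{\sum_{k=1}^{K}\hat{\oP}(\phi(\vx)\mid y=k)}.
\]
The hypothesis of the proposition I would formalize as: for $\vx$ drawn from class $j$ one has $\hat{\oP}(\phi(\vx)\mid y=j)=\alpha(\vx)\,\oP(\phi(\vx)\mid y=j)$ with $\alpha(\vx)<1$, while the off-class terms $\hat{\oP}(\phi(\vx)\mid y=k)$, $k\neq j$, stay essentially unbiased --- the distortion exhibited in \Cref{fig:ft-feature} affects only the class a sample actually belongs to (in the worst case one needs only the ratio $\hat{\oP}(\phi(\vx)\mid y=j)/\sum_{k\neq j}\hat{\oP}(\phi(\vx)\mid y=k)$ to be deflated relative to its true value, which is strictly weaker).

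For the prediction bias I would compare $\hat q(y=j\mid\phi(\vx))$ with the Bayes-optimal balanced posterior $q^{\star}(y=j\mid\phi(\vx))=\oP(\phi(\vx)\mid y=j)/\sum_k\oP(\phi(\vx)\mid y=k)$. Since $t\mapsto t/(t+C)$ is increasing in $t$, the deflation forces $\hat q(y=j\mid\phi(\vx))<q^{\star}(y=j\mid\phi(\vx))$ pointwise, with a gap that widens as $\alpha(\vx)\downarrow$; and because the $K$ predicted posteriors sum to one, the remaining mass $\sum_{k\neq j}\hat q(y=k\mid\phi(\vx))$ is correspondingly inflated, so the $\argmax$ decision is pushed off class $j$ onto the other classes. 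To turn this into a statement about error rate I would lower-bound the extra class-$j$ misclassification by the $\oP(\cdot\mid y=j)$-measure of the ``flip set'' $\{\vx:\ \alpha(\vx)\,\oP(\phi(\vx)\mid y=j)\le\max_{k\neq j}\oP(\phi(\vx)\mid y=k)<\oP(\phi(\vx)\mid y=j)\}$, \ie the samples that were decided correctly before the deflation and are flipped onto some $k\neq j$ by it.

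For the loss I would first observe that the pointwise inequality $\hat q(y=j\mid\phi(\vx))<q^{\star}(y=j\mid\phi(\vx))$ just established gives $-\log\hat q(y=j\mid\phi(\vx))>-\log q^{\star}(y=j\mid\phi(\vx))$, so integrating against $\oP(\cdot\mid y=j)$ yields $\mathcal{L}_j:=\E_{\vx\sim\oP(\cdot\mid y=j)}[-\log\hat q(y=j\mid\phi(\vx))]>\mathcal{L}_j^{\star}$, strictly above the Bayes-optimal class-$j$ risk $\mathcal{L}_j^{\star}$, with per-sample excess $\log(q^{\star}/\hat q)=\log(1/\alpha(\vx))+\log\!\big(1-(1-\alpha(\vx))\,q^{\star}(y=j\mid\phi(\vx))\big)>0$ that grows monotonically with the deflation $1/\alpha(\vx)$. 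I would then read off the ``underestimated loss'' by contrasting $\mathcal{L}_j$ with the quantity the training procedure actually drove down, $\widehat{\mathcal{L}}_j:=\E_{\vx\sim\hat{\oP}(\cdot\mid y=j)}[-\log\hat q(y=j\mid\phi(\vx))]$: the integrand is largest precisely where $\hat{\oP}(\cdot\mid y=j)$ is small relative to $\sum_k\hat{\oP}(\cdot\mid y=k)$, and the deflation $\alpha(\vx)<1$ is exactly the statement that the true class-$j$ distribution keeps putting mass on that region while the fitted $\hat{\oP}(\cdot\mid y=j)$ has shifted mass away, so $\widehat{\mathcal{L}}_j<\mathcal{L}_j$ --- the loss the model ``sees'' on class $j$ is an underestimate of the true one.

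The main obstacle is the first (modeling) step: one must commit to a precise sense in which the trained head ``is'' the balanced likelihood ratio --- either by invoking the Bayes-consistency of the logit-adjusted loss at its population minimizer, or by assuming a well-specified softmax head on the features --- and to a precise reading of ``underestimates'' (the multiplicative deflation $\alpha(\vx)<1$ used above, or a weaker stochastic-dominance version). Once that bookkeeping is fixed, the two consequences reduce to the one-line monotonicity and expectation arguments above; the only remaining care is showing that the ``flip set'' is non-null, which is precisely what the measured inter- and intra-class statistics in \Cref{fig:ft-feature} are there to supply.
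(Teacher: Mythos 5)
Your proposal is correct in substance and reaches both claimed consequences, but it takes a genuinely different route from the paper. The paper's proof is an exact importance-ratio identity: it rewrites the source posterior $\oP_s(y=j\mid\phi(\vx))$ in terms of the target posterior, the class priors, and the class-conditional density ratio $\zeta_{s-t}(j)=\oP_s(\phi(\vx)\mid y=j)/\oP_t(\phi(\vx)\mid y=j)$, so that the loss acquires an additive offset $\log\zeta_{s-t}(j)$ in the logit of class $j$. The LA loss implicitly sets this offset to zero; when the target class-conditional is underestimated, $\log\zeta_{s-t}(j)>0$, the per-sample loss $-\log\oP_s(y=j\mid\phi(\vx))$ is therefore smaller than what the training objective reports, and the optimization under-weights class $j$. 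This is assumption-free (it is an algebraic identity plus a sign observation) and ties the failure directly to the consistency assumption $\zeta_{s-t}(j)=1$ behind Balanced-Softmax and logit adjustment, which is the point the subsequent Remark makes. You instead work at the population-minimizer level: you posit that the trained logits represent balanced log-likelihoods, model the learned class-conditional as a multiplicative deflation $\alpha(\vx)<1$ of the truth, and then compare fitted versus Bayes-optimal posteriors (monotonicity of $t\mapsto t/(t+C)$) and fitted versus true class-$j$ risks. This buys you more: a pointwise posterior inequality, an explicit flip-set lower-bounding the induced misclassification, and a quantitative per-sample excess risk. But it costs you two extra commitments the paper does not need: (i) a well-specification/Bayes-consistency assumption identifying $z_k$ with $\log\hat{\oP}(\phi(\vx)\mid y=k)$ up to a class-independent term, together with the assumption that the off-class conditionals are not also deflated; and (ii) the final step $\widehat{\gL}_j<\gL_j$, which as stated is a heuristic mass-shifting argument --- to make it rigorous you would need the covariance condition $\Cov_{\oP(\cdot\mid y=j)}\bigl(\alpha(\vx),\,-\log\hat q(y=j\mid\phi(\vx))\bigr)<0$, i.e.\ that the deflation is anti-correlated with the loss, which is plausible here but is an additional hypothesis rather than a consequence. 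Since the paper's own proof is itself informal at the corresponding step, this is not a disqualifying gap, but the paper's $\zeta_{s-t}$ decomposition gets the stated conclusion more directly and with fewer modeling commitments.
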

\begin{proof}
Denote $\oP_s$ and $\oP_t$ as the probability distributions in the source (training) and target (test) domains, respectively. In long-tail learning scenarios, $\oP_s(y)$ follows a long-tail distribution and $\oP_t(y)$ follows a uniform distribution, \ie, $\oP_t(y=k)\equiv 1/K$. Consequently, we have
\begin{align}
&\oP_s(y=j\mid\phi(\vx))= \cfrac{\oP_s(y=j\mid\phi(\vx))}{\sum_{k=1}^{K}\oP_s(y=k\mid\phi(\vx))} \nonumber\\
&= \cfrac{\oP_t(y=j\mid\phi(\vx))\cdot\cfrac{\oP_s(y=j\mid\phi(\vx))}{\oP_t(y=j\mid\phi(\vx))}}{\sum_{k=1}^{K}\oP_t(y=k\mid\phi(\vx))\cdot\cfrac{\oP_s(y=k\mid\phi(\vx))}{\oP_t(y=k\mid\phi(\vx))}} \nonumber \\
&= \cfrac{\oP_t(y=j\mid\phi(\vx))\cdot\cfrac{\oP_s(\phi(\vx)\mid y=j)}{\oP_t(\phi(\vx)\mid y=j)}\cdot\cfrac{\oP_s(y=j)}{\oP_t(y=j)}}{\sum_{k=1}^{K}\oP_t(y=k\mid\phi(\vx))\cdot\cfrac{\oP_s(\phi(\vx)\mid y=k)}{\oP_t(\phi(\vx)\mid y=k)}\cdot\cfrac{\oP_s(y=k)}{\oP_t(y=k)}} \nonumber \\
&= \cfrac{\oP_t(y=j\mid\phi(\vx))\cdot\oP_s(y=j)\cdot\zeta_{s-t}(j)}{\sum_{k=1}^{K}\oP_t(y=k\mid\phi(\vx))\cdot\oP_s(y=k)\cdot\zeta_{s-t}(k)}
\end{align}
\begin{align}
&\gL(\phi(\vx),y=j)=-\log\oP_s(y=j\mid\phi(\vx)) \nonumber \\
&=-\log\cfrac{\exp\left(z_j+\log\oP_s(y=j)+\log\zeta_{s-t}(j)\right)}{\sum_{k=1}^{K}\exp\left(z_k+\log\oP_s(y=k)+\log\zeta_{s-t}(k)\right)}
\end{align}
where $\zeta_{s-t}(j)=\cfrac{\oP_s(\phi(\vx)\mid y=j)}{\oP_t(\phi(\vx)\mid y=j)}$ and $z_j$ is the predicted logit on class $j$.
For samples $\vx$ belonging to class $j$, the underestimated $\oP_t(\phi(\vx)\mid y=j)$ leads to an underestimation of $\gL(\phi(\vx),y=j)$, and consequently results in a biased optimization towards other classes.
\end{proof}

\begin{remark}
\Cref{prop:ccd} reveals that the performance degradation in full fine-tuning stems from inconsistent class-conditional distributions among tail classes. Existing approaches such as Balanced-Softmax \cite{ren2020balanced} and Logit-adjusted loss \cite{menon2021longtail,hong2021disentangling,zhao2022adaptive,li2022long} inherently assume a consistent class-conditional distribution between the source and target domains, \ie, $\zeta_{s-t}(j)=1$. However, as empirically demonstrated in \Cref{fig:feature-full}, full fine-tuning violates this assumption. While an ideal solution would involve estimating the underlying class-conditional distribution, this is infeasible in practice due to the scarcity of tail-class data. Alternatively, another viable approach is to mitigate such distribution distortions. To achieve this goal, we introduce lightweight fine-tuning in \Cref{sec:method}.
\end{remark}

To address generalization challenges with long-tail data, recent studies have explored two-stage training procedures \cite{ma2021simple} or incorporated external training data \cite{tian2022vl,long2022retrieval}. However, these strategies typically incur substantial training overhead or require additional data sources, which hinders their practical deployment. In response to this, we introduce \algo, an efficient and accurate lightweight fine-tuning framework tailored for long-tail learning.

\subsection{Lightweight Fine-Tuning Helps}
\label{sec:lightweight}

\begin{figure}[!t]
\centering
    \includegraphics[width=0.40\linewidth, trim=0 0 6 0]{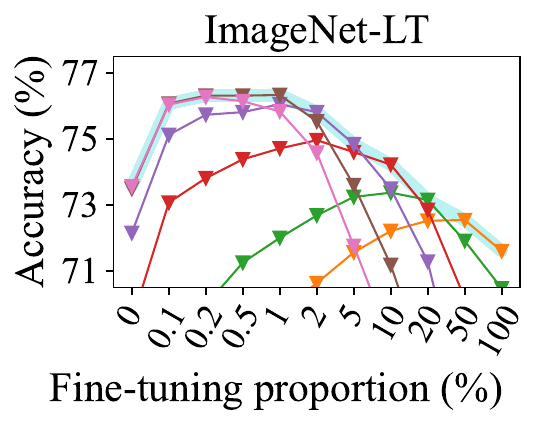}
    \includegraphics[width=0.40\linewidth, trim=6 0 0 0]{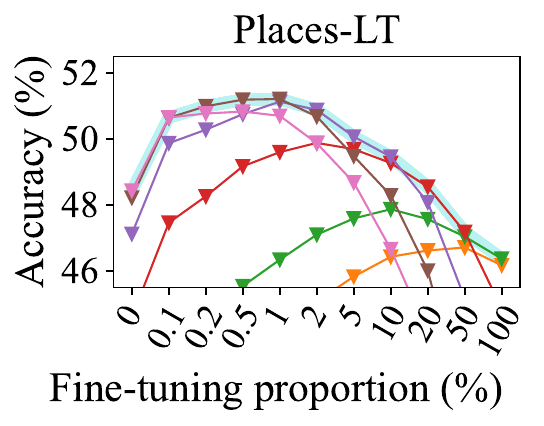}
    \includegraphics[width=0.12\linewidth, trim=10 0
 10 0]{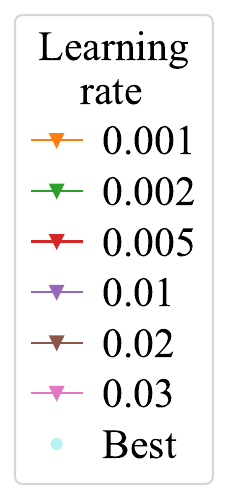}
    \caption{Fine-tuning a small proportion of all parameters (\eg, 0.1\%-2\%) yields superior performance. As the proportion increases, performance deteriorates even when we search for the best learning rate.}
    \label{fig:ft-proportion}
\end{figure}

To mitigate intra-class distribution distortion, a straightforward approach is to constrain the number of learned parameters. Formally, for each weight matrix $\mW\in\mathbb{R}^{d_1\times d_2}$ in the foundation model, we optimize a specified proportion $\alpha$ of parameters in $\mW$, while keeping the rest frozen. This approach allows only $\alpha d_1 d_2$ parameters to be optimized. In implementation, a sparse binary mask $\mM\in \{0,1\}^{d_1\times d_2}$ is employed to select the optimized parameters:
\begin{equation}
    \mX\mW \rightarrow \mX(\mW\circ \mM)+\underbrace{\mX(\mW\circ (1-\mM))}_{\textrm{gradient detached}}
\end{equation}
where $\Vert \mM\Vert_0=\alpha d_1 d_2$. 
In \Cref{fig:ft-proportion}, we investigate the impact of varying proportions of fine-tuned parameters. The result demonstrates the advantages of lightweight fine-tuning, since even a small proportion (\eg, 0.1\%) yields significant performance improvements. In contrast, as the proportion increases, the performance faces a risk of degradation, thereby underscoring the drawbacks of heavy fine-tuning. Moreover, heavy fine-tuning is sensitive to hyperparameters, as it requires careful searching of the learning rate to achieve optimal results.

\begin{figure}[!t]
\setlength{\tabcolsep}{0.2ex}
\subfloat[Arbitrary lightweight ft.]{
    \begin{tabular}{cc}
    \includegraphics[width=0.24\linewidth]{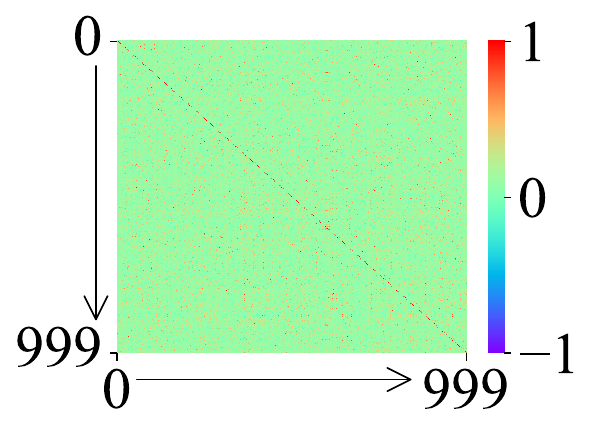} &
    \includegraphics[width=0.21\linewidth]{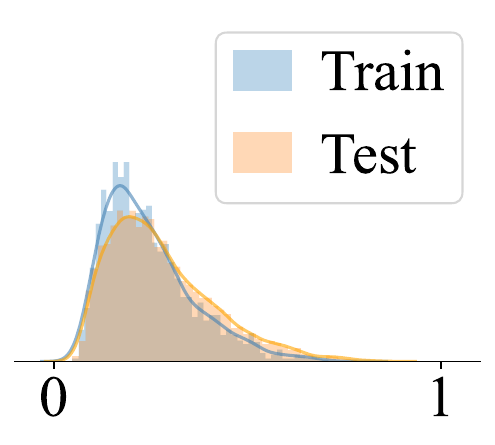} \\
    \color{mplgreen}{\fontsize{7.5pt}{7.5pt}\selectfont Head acc.: 79.4\%} &
    \color{mplgreen}{\fontsize{7.5pt}{7.5pt}\selectfont Tail acc.: 70.7\%} \\
    \end{tabular}
    \label{fig:feature-alf}
}
\subfloat[Structured lightweight ft.]{
    \begin{tabular}{cc}
    \includegraphics[width=0.24\linewidth]{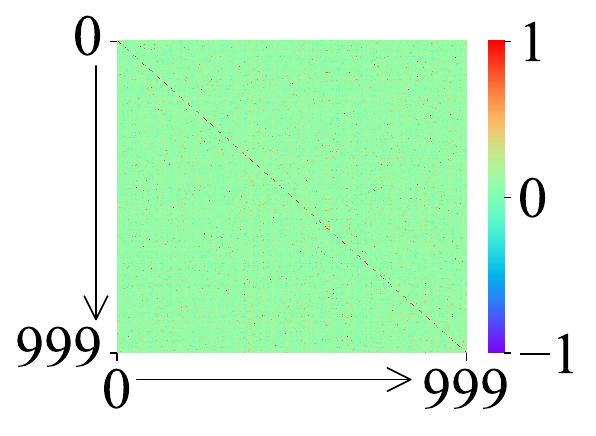} &
    \includegraphics[width=0.21\linewidth]{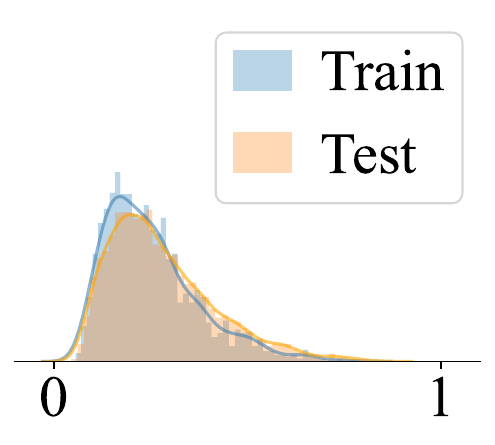} \\
    \color{mplgreen}{\fontsize{7.5pt}{7.5pt}\selectfont Head acc.: 79.9\%} &
    \color{mplgreen}{\fontsize{7.5pt}{7.5pt}\selectfont Tail acc.: 71.7\%} \\
    \end{tabular}
    \label{fig:feature-slf}
}
\caption{Inter-class feature similarities (heatmaps) and intra-class distributions from tail classes (histograms) on ImageNet-LT. Both arbitrary and structured lightweight fine-tuning perform well in optimizing inter-class similarities and preserving intra-class distributions.}
\label{fig:ft-lightweight}
\end{figure}

It is noteworthy that the optimized parameters are selected in an arbitrary manner, constituting what we term as \textit{arbitrary lightweight fine-tuning}. Despite its simplicity, the performance improvement is remarkably significant. This suggests that lightweight fine-tuning is crucial to enhance long-tail learning even without specific fine-tuning strategies. 
To gain a deeper understanding, we visualize both inter-class feature similarities and intra-class distributions in \Cref{fig:feature-alf}. The results demonstrate that arbitrary lightweight fine-tuning achieves feature separability comparable to that of full fine-tuning, while effectively maintaining consistent class conditions between training and test data. Furthermore, prediction accuracy substantiates the efficacy of arbitrary lightweight fine-tuning, as it matches the high performance of full-fine-tuning on head classes and even achieves superior performance on tail classes.

By investigating the above arbitrary strategy, we demonstrate that the key to effective fine-tuning lies in learning a small set of parameters rather than optimizing the entire model.
Motivated by this insight, we further explore \textit{structured lightweight fine-tuning}, which learns a small set of task-specific parameters in a structured manner. Some related ideas, although already used in foundation models, are not directly designed for long-tail learning. We presented more detailed introductions in \Cref{sec:tech_detail} due to space constraints.
\Cref{fig:feature-slf} illustrates the potential of a very simple idea of structured lightweight fine-tuning.
As shown, this method optimizes inter-class similarities to almost orthogonal while preserving intra-class distributions undistorted. Notably, the performance surpasses arbitrary lightweight fine-tuning on both head and tail classes. These results indicate that the well-designed lightweight module can further enhance the feature separability as well as maintain the consistency of class-conditional distributions, and will be more accurate.

\section{\algo: Efficient and Accurate Long-Tail Learning Framework}
\label{sec:method}
\subsection{The Proposed Fine-Tuning Method}

Our analyses justify that lightweight fine-tuning can significantly alleviate performance degradation and enhance generalization. Based on this finding, we propose \textit{LIghtweight Fine-Tuning} (\algo), an efficient and accurate framework for long-tail learning.
The \algo\ framework is versatile and inclusive, accommodating the integration of various lightweight fine-tuning methods, including arbitrary lightweight fine-tuning proposed in \Cref{sec:lightweight}, and various structured lightweight fine-tuning methods, such as BitFit, VPT, Adapter, LoRA, and AdaptFormer. Although these methods have shown efficacy in foundation models, their performance remains suboptimal to full fine-tuning in previous studies \cite{zaken2022bitfit,jia2022visual,houlsby2019parameter,hu2022lora,chen2022adaptformer}. In contrast, our work first identifies the limitations of full fine-tuning, thereby revealing the untapped potential of these lightweight fine-tuning methods in long-tail learning. Due to space constraints, the technical details are presented in \Cref{sec:tech_detail}.

Crucially, lightweight fine-tuning maintains the consistency of class-conditional distributions. This property enables us to adopt an unbiased LA loss for optimization:
\begin{equation}
    \gL(\vx,y=j)=-\log\frac{\exp({z_j}+\log \oP(y=j))}{\sum_{k\in[K]}\exp({z_k}+\log \oP(y=k))}
\end{equation}
where $y=j$ represents the ground-truth label of $\vx$ and $z_j$ denotes the predicted logit. $\oP(y=j)$ signifies the class prior probability, which can be estimated from the training data. 

In order to have better adaptability among tasks, in addition to lightweight fine-tuning, a classifier is essential for discerning the features for different tasks. The linear classifier is commonly adopted due to its simplicity and versatility. Given a feature vector $\phi(\vx)$, the predicted logit for class $j$ is computed as $z_j=\phi(\vx)\vw_j^{\top}+b_j$.
However, when training with long-tail data, the classifier weight norms $\Vert \vw_k\Vert_2\ (1\leq k\leq K)$ tend to exhibit an imbalanced distribution, leading to biased predictions \cite{kang2020decoupling,wei2021towards}. To overcome this issue and draw on the strengths of CLIP, which optimizes cosine distances in feature space, inspired by \cite{wei2021towards}, we propose to optimize an enhanced cosine classifier $z_j=\sigma\cdot\langle\phi(\vx), \vw_j\rangle$, where $\sigma$ is a scaling factor. This design inherently eliminates the influence of the classifier norm by employing cosine similarity, thereby effectively mitigating biased predictions.

\begin{figure}[!t]
    \centering
    \includegraphics[width=\linewidth]{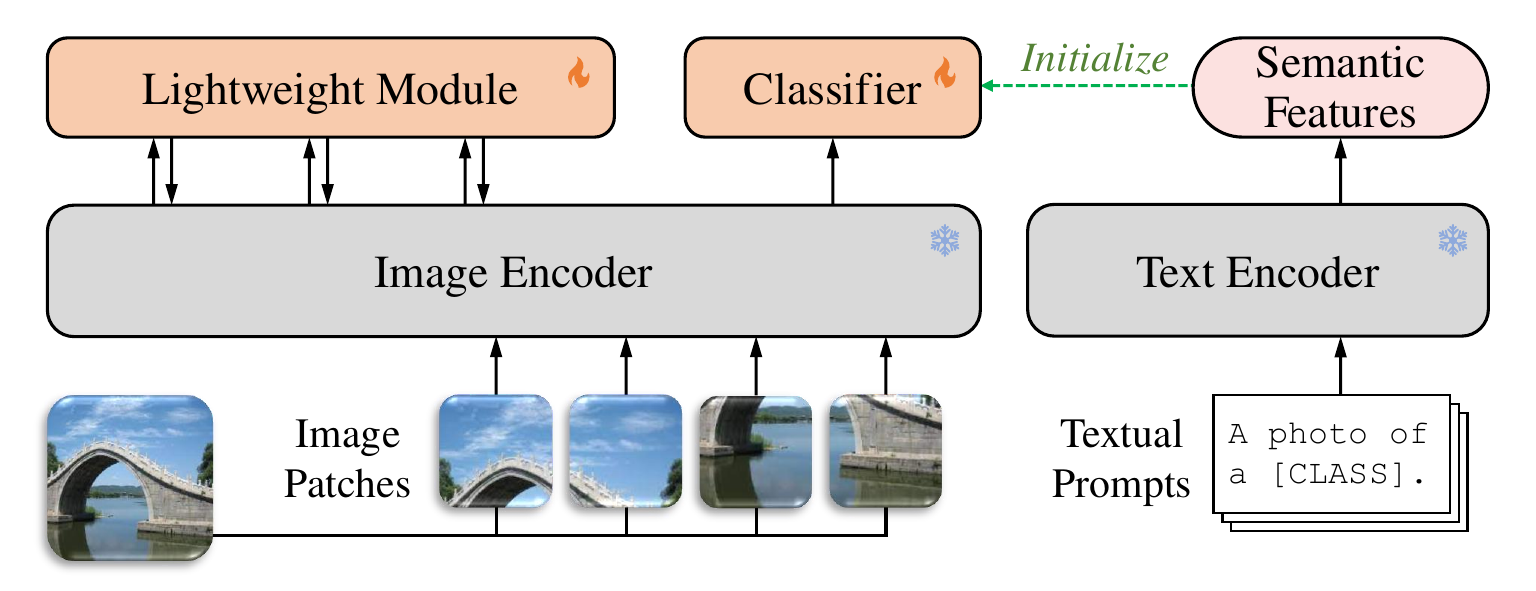}
    \caption{An overview of the proposed semantic-aware initialization. The text encoder is discarded after a single forward pass to initialize the classifier.}
    \label{fig:sai}
\end{figure}

\subsection{Semantic-Aware Initialization}
\label{sec:sai}
It has been demonstrated that an uninitialized classifier can adversely affect model fine-tuning \cite{kumar2022finetuning}. Therefore, it is crucial to set an appropriate initial state for the classifier. A straightforward method is to apply linear probing by optimizing balanced loss. Another approach is to utilize the class mean features. However, these two approaches not only require extracting features of training data, but are also unreliable with scarce tail-class data.
To overcome it, we propose to leverage the semantic knowledge from the text modality of CLIP.
Specifically, we generate hand-crafted textual prompts (\eg, ``\texttt{a photo of a [CLASS].}'') and compute their text features $\psi(\vt_1),\cdots,\psi(\vt_K)$, which are then utilized to initialize the classifier weights as $\vw_j=\psi(\vt_j)\mP_T\mP_I^{\top}$. In this way, we can converse image-text matching $\vz_j=\langle\phi(\vx)\mP_I, \psi(\vt_j)\mP_T\rangle$ to feature-classifier matching $\vz_j=\langle\phi(\vx), \vw_j\rangle$, where the initial performance of the model is still close to CLIP.
We call this way \textit{semantic-aware initialization} (SAI).
Unlike prior methods that fine-tune both the image encoder and the text encoder in optimization processes \cite{ma2021simple,tian2022vl}, SAI relies on a single forward pass of the text encoder for each class description. After that, the text encoder is discarded. This simple approach allows us to achieve a better initial state of the classifier with small computational overhead.
\Cref{fig:sai} gives an overview of the proposed semantic-aware initialization.

\subsection{Minimalist Data Augmentation}
The above methods incorporate tailored strategies at both the model representation and the output levels, including a lightweight fine-tuning module, an unbiased logit-adjusted loss, an enhanced cosine classifier, and a semantic-aware initialization method. However, such a framework still uses conventional data augmentation \cite{simonyan2014very,he2016deep,cubuk2019autoaugment,cubuk2020randaugment} at the input level. The conventional data augmentation strategy is proposed to train a deep model from scratch, which consists of random cropping, stretching, resizing, and flipping of a given image. The purpose is to generate more diverse samples and to cover the underlying data distribution. However, when it comes to fine-tuning foundation models, the key point is not to simulate richer data, but to adapt more efficiently and precisely to downstream data. Moreover, conventional data augmentation faces the risk of generating distorted or trivial images, hindering robust generalization, especially for tail classes where training samples are severely limited. In our learning framework, the number of training epochs is quite small (\eg, 5-15), which means that an image is augmented only a few times, and the quality of augmentation is quite crucial.

\begin{figure}[!t]
    \centering
    \subfloat[]{
        \includegraphics[width=0.63\linewidth]{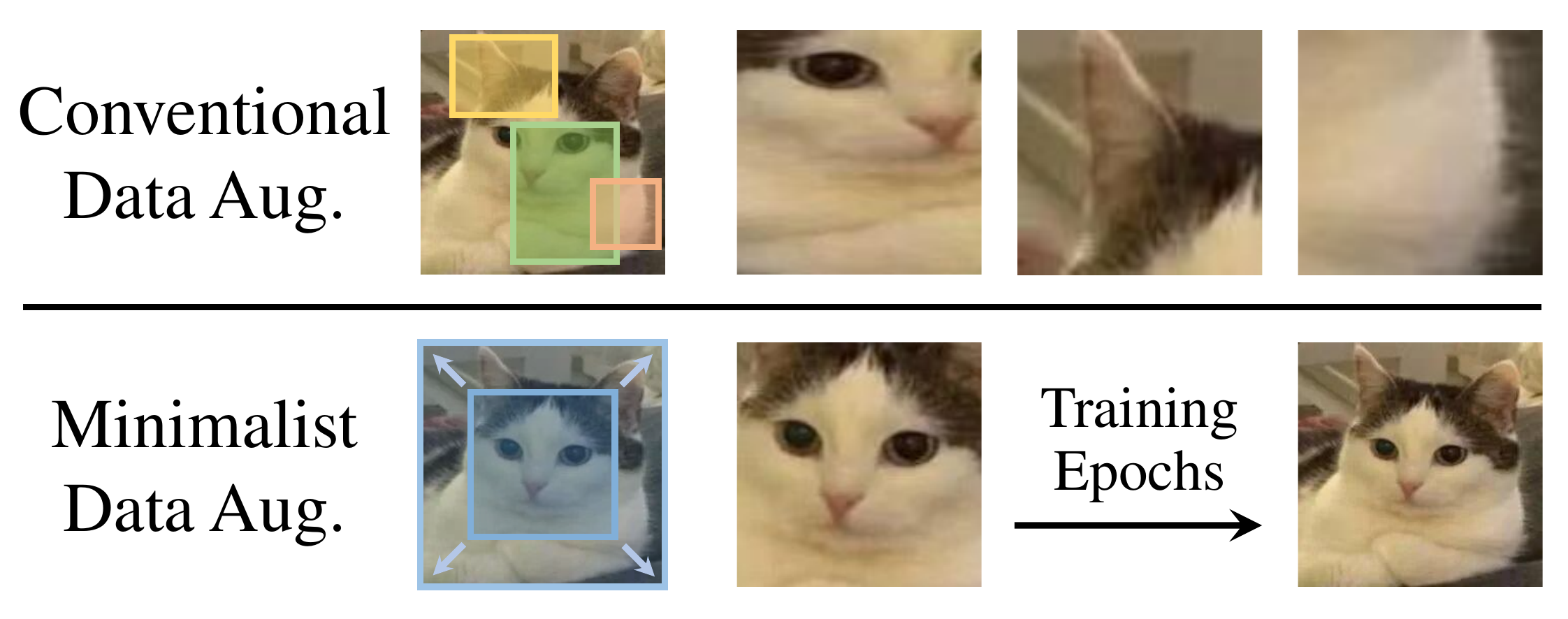}
        \label{fig:mda_overview}
    }
    \hfill
    \subfloat[]{
        \includegraphics[width=0.28\linewidth]{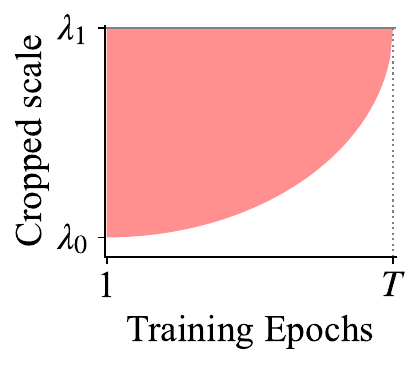}
        \label{fig:mda_func}
    }
    \caption{(a) Comparison of conventional and minimalist data augmentation. (b) The cropped scale scheduling function.}
    \label{fig:mda}
\end{figure}

\begin{algorithm}[!t]
    \renewcommand{\algorithmicrequire}{\textbf{Input:}}
    \renewcommand{\algorithmicensure}{\textbf{Output:}}
    \caption{Minimalist Data Augmentation}
    \label{alg:mda}
    \small
    \setstretch{1.2}
    \begin{algorithmic}[1]
    \REQUIRE Image $\vx$ (resolution $h\times w$), cropped scale range $(\lambda_0, \lambda_1)$, 
    {epoch $t\in [T]$, scheduling function $g(\cdot): [0,1]\rightarrow[0,1]$.}
    \ENSURE Image $\vx'$ (resolution $a\times a$).
    \STATE {$\Delta\lambda_0=(\lambda_1-\lambda_0)g(\frac{t-1}{T-1})$.}
    \STATE Sample $\lambda\sim(\lambda_0{+\Delta\lambda_0}, \lambda_1)$.
    \STATE $h'=w'=\sqrt{hw\lambda}$.
    \STATE Randomly crop $\vx'$ ($h'\times w'$) from $\vx$.
    \STATE Resize $\vx'$ to $a\times a$.
    \end{algorithmic}
\end{algorithm}

To address this dilemma, we further introduce a novel \textit{minimalist data augmentation} (MDA) strategy. Specifically, we eliminate stretching and flipping operations and only crop images with consistent proportions to ensure image fidelity. In addition, we design a scheduling function $g(\cdot)$ to control the scale of cropped images from local regions to the global level based on the training process. Through this approach, we ensure that critical information is not neglected, especially at the end of training. As illustrated in \Cref{fig:mda_overview}, conventional data augmentation generates many redundant or misleading training samples, while the proposed MDA is minimalist and essential. The detailed procedure is presented in \Cref{alg:mda}. 
Following the implementation of multiple previous works \cite{simonyan2014very,he2016deep,cubuk2019autoaugment,cubuk2020randaugment} and the official setting of PyTorch\protect\footnotemark[1], we set $(\lambda_0, \lambda_1)=(0.08, 1)$. The scheduling function is set as a simple circular convex function $g(t)=1-\sqrt{1-t^2}$ (plot in \Cref{fig:mda_func}). The analysis of different scheduling functions is given in \Cref{sec:component_analysis}. Compared to conventional data augmentation, MDA is even more efficient due to its pruning of redundant steps.
\footnotetext[1]{The official PyTorch implementation of image cropping: \rrcurl}

\subsection{Test-Time Ensembling}
It is well-established that applying random perturbations to each input can lead to improved generalization during tests \cite{sun2020test,wang2021tent,zhou2023ods}. Recent works further discover that scaling the test-time computation can improve the output and can be even more effective than scaling the model parameters \cite{snell2024scaling,chen2024expanding}. This principle is particularly suitable for enhancing Transformer-based models, where an image is split into multiple patches, potentially resulting in the partitioning of continuous patterns into discrete patches. Such partitioning aggravates the data scarcity issues, especially for tail classes with limited data. To enhance generalization robustness, we propose to aggregate the predictions from a set of perturbed versions of the image. Formally, given a test image $\boldsymbol{x}$, its predicted logits $\vz$ are obtained by averaging the predictions from $M$ perturbed versions:
\begin{equation}
\vz=\log\oP(\vy\mid \vx)=\frac{1}{M} \sum_{i=1}^M \log\oP(\vy \mid \alpha_{i}(\vx))
\end{equation}
Here, $\alpha_i(\boldsymbol{x})$ represents different perturbed versions of $\boldsymbol{x}$. 
We term this technique \textit{test-time ensembling} (TTE).

\begin{algorithm}[!t]
\renewcommand{\algorithmicrequire}{\textbf{Input:}}
\renewcommand{\algorithmicensure}{\textbf{Output:}}
\caption{Test-Time Ensembling}
\label{alg:tte}
\small
\setstretch{1.2}
\begin{algorithmic}[1]
\REQUIRE Image $\vx$ (resolution $a\times a$), expanded size $e$.
\ENSURE Predicted logits $\vz$.
\STATE Resize $\vx$ to $\vx'$ sized $(a+e)\times(a+e)$.
\STATE Crop the center $a\times a$ region of $\vx'$, denoted by $\vx^\text{c}$.
\STATE Split $\vx^\text{c}$ evenly into $m$ patches $[\vx^{\text{p}}_1;\cdots;\vx^{\text{p}}_m]$.
\STATE Calculate the feature $\phi(\vx^\text{c})$ and then the logits $\vz^\text{c}$.
\STATE {Crop the $a\times a$ region from each corner (top left, top right, bottom left, and bottom right) of $\vx'$, repeat procedure 3-4 and obtain the corresponding logits $\vz^\text{tl}$, $\vz^\text{tr}$, $\vz^\text{bl}$, $\vz^\text{br}$.}
\STATE $\vz=\operatorname{Average}([\vz^\text{c}{{, \vz^\text{tl}, \vz^\text{tr}, \vz^\text{bl}, \vz^\text{br}}}])$.
\end{algorithmic}
\end{algorithm}

In practice, we employ a simple but effective perturbation approach, \ie, using different cropping regions.
This approach helps mitigate the bias introduced by image patches.
The detailed procedure is presented in \Cref{alg:tte}.
Conventionally, an image is first resized, center-cropped, and then split into patches before being fed into the Transformer model. However, this approach inevitably leads to the partitioning of important patterns across different patches, thus impeding generalization. By applying diverse croppings, patterns that were separated in one cropping can be preserved in another. It is crucial to emphasize that the expanded size $e$ should not be a multiple of the patch size; otherwise, the five cropped images will share a large portion of the same patches, rendering the expected diversity unattainable. For example, when the patch size is 16, we suggest setting $e$ to 8, 24, 40 rather than 16, 32, 48. We provide an empirical analysis of different expanded sizes in \Cref{sec:component_analysis}.
The proposed TTE strategy can be seamlessly integrated into \algo\ with minimal computational overhead while consistently improving performance.
For fair comparison, we will make necessary statements when equipping any method with TTE.

\section{Empirical Study}
\label{sec:empirical_study}

\subsection{Experimental Settings}
\label{sec:exp_settings}
We evaluate our approach on four widely-used long-tail datasets, including ImageNet-LT \cite{liu2019large}, Places-LT \cite{liu2019large}, iNaturalist 2018 \cite{van2018inaturalist}, and CIFAR-100-LT \cite{cao2019learning}.
ImageNet-LT has 115.8K images from 1000 classes, with class frequencies ranging from 1280 to 5 samples.
Places-LT contains 62.5K images from 365 classes, with class frequencies ranging from 4980 to 5 samples.
iNaturalist 2018 consists of 437.5K images distributed across 8142 species, with the number of images per species varying from 1000 to 2.
CIFAR-100-LT is constructed with multiple imbalance ratio options, including 100, 50, and 10.
The models are trained on long-tail datasets and subsequently assessed on their corresponding balanced test datasets. 
In addition to measuring overall accuracy, we follow the evaluation protocol introduced by \cite{liu2019large} to report accuracy among three splits of classes: head classes (containing more than 100 images), medium classes (20 to 100 images), and tail classes (fewer than 20 images).

\subsection{Implementation Details}
\label{sec:imp_details}

For all experiments, we use the SGD optimizer with a batch size of 128, a weight decay of $5\times 10^{-4}$, and a momentum of 0.9. For \algo, the learning rate is 0.02; and for full fine-tuning, we search for the optimal learning rate from \{0.03, 0.02, 0.01, 0.005, 0.002, 0.001\} considering its weak stability. For ImageNet-LT, Places-LT, and CIFAR-100-LT, we train the model for only 5 epochs; and for iNaturalist 2018, we train 15 epochs since it has much more data. We set the bottleneck dimensionality $r=2^{\lfloor\log_{2}{(\frac{K}{2L})}\rfloor}$ for Adapter and AdaptFormer such that it learns even fewer parameters than the classifier (please refer to \Cref{sec:tech_detail} for detailed analysis). The scaling factor $\sigma$ of the cosine classifier is set to 25 (please refer to \Cref{table:comp_classifier} for analysis). The expanded size $e$ for TTE is set to 24 (please refer to \Cref{fig:expand_size} for analysis). All experiments in this paper are conducted on a single NVIDIA 4090 GPU with only 24GB of memory.

\subsection{Comparison with State-of-the-art Methods}

\begin{table}[!t]
\caption{Comparison with state-of-the-arts methods on ImageNet-LT. $\dagger$ denotes methods using external data. Empty cells ``-'' indicate unreported results in their original papers.}
\label{table:comp_imagenetlt}
\setlength{\tabcolsep}{0.12ex} 
\centering
\begin{small}
\begin{tabular}{l|c|c|c|cccc}
\toprule
\bf Methods &\bf Backbone &\makecell{\bf Learned \\ \bf Params.} &\bf Epochs &\bf All &\bf Head &\bf Med. &\bf Tail \\
\midrule
\multicolumn{5}{l}{\bf Training from scratch} \\
\midrule
cRT \cite{kang2020decoupling} & RN-50 & 23.51M & 90+10 & 47.3 & 58.8 & 44.0 & 26.1 \\
LWS \cite{kang2020decoupling} & RN-50 & 23.51M & 90+10 & 47.7 & 57.1 & 45.2 & 29.3 \\
MiSLAS \cite{zhong2021improving} & RN-50 & 23.51M & 180+10 & 52.7 & 62.9 & 50.7 & 34.3 \\
KCL \cite{kang2021exploring} & RN-50 & 23.51M & 200 & 51.5 & 61.8 & 49.4 & 30.9 \\
LA \cite{menon2021longtail} & RN-50 & 23.51M & 90 & 51.1 & - & - & - \\
DisAlign \cite{zhang2021distribution} & RN-50 & 23.51M & 90 & 52.9 & 61.3 & 52.2 & 31.4 \\
RIDE \cite{wang2021longtailed} & RN-50 & 23.51M & 100 & 55.4 & 66.2 & 52.3 & 36.5 \\
rwSAM \cite{liu2022selfsupervised} & RN-50 & 23.51M & 100 & 55.5 & - & - & - \\
BCL \cite{zhu2022balanced} & RN-50 & 23.51M & 100 & 56.0 & - & - & - \\
PaCo \cite{cui2021parametric} & RN-50 & 23.51M & 400 & 57.0 & - & - & - \\
NCL \cite{li2022nested} & RN-50 & 23.51M & 400 & 57.4 & - & - & -  \\
LiVT \cite{xu2023learning} & ViT-B/16 & 85.80M & 100 & 60.9 & 73.6 & 56.4 & 41.0 \\
\midrule
\multicolumn{8}{l}{\bf Fine-tuning foundation model (from CLIP)} \\
\midrule
BALLAD \cite{ma2021simple} & ViT-B/16 & 149.62M & 50+10 & 75.7 & 79.1 & 74.5 & 69.8 \\
VL-LTR \cite{tian2022vl}$^\dagger$ & ViT-B/16 & 149.62M & 100 & 77.2 &\bf 84.5 & 74.6 & 59.3 \\
GML \cite{suh2023long}$^\dagger$ & ViT-B/16 & 149.62M & 100 & 78.0 & - & - & - \\
Decoder \cite{wang2023exploring} & ViT-B/16 & 21.26M & $\sim$18 & 73.2 & - & - & - \\
\algo\ (Ours) & ViT-B/16 &\bf 0.62M &\bf 5 & 77.0 & 79.9 & 76.1 & 71.7 \\
\ w/ TTE (Ours) & ViT-B/16 &\bf 0.62M &\bf 5 &\bf 78.3 & 81.0 &\bf 77.5 &\bf 73.4 \\
\bottomrule
\end{tabular}
\end{small}
\end{table}

\textbf{Results on ImageNet-LT.} We report the test accuracy in \Cref{table:comp_imagenetlt}. While existing approaches such as VL-LTR \cite{tian2022vl} and GML \cite{suh2023long} rely on extensive auxiliary data to facilitate fine-tuning, our method \algo\ achieves competitive performance with significantly lower costs. Reliance on external data not only incurs substantial computational overhead but also limits practical applicability, as such data is often unavailable in real-world scenarios. Notably, \algo\ requires only 5 epochs of training and fine-tunes far fewer model parameters (\ie, from 21.26M to 0.62M). Compared to methods that do not use auxiliary data, \algo\ demonstrates even more pronounced advantages, surpassing the best existing method by 1.3\% in accuracy. It is worth noting that results using the ImageNet-21K pre-trained foundation model are not reported, considering the class overlap between it and ImageNet-LT. RAC \cite{long2022retrieval} and LPT \cite{dong2023lpt} are not included, since they default to employing the ImageNet-21K pre-trained foundation model, and the results on ImageNet-LT were not reported in their original papers.

\begin{table}[!t]
\caption{Comparison with state-of-the-arts methods on Places-LT. $\dagger$ denotes methods using external data. Empty cells ``-'' indicate unreported results in their original papers.}
\label{table:comp_placeslt}
\setlength{\tabcolsep}{0.12ex} 
\centering
\begin{small}
\begin{tabular}{l|c|c|c|cccc}
\toprule
\bf Methods &\bf Backbone &\makecell{\bf Learned \\ \bf Params.} &\bf Epochs &\bf All &\bf Head &\bf Med. &\bf Tail\\ 
\midrule
\multicolumn{8}{l}{\bf Training from scratch (with ImageNet pre-trained weights)} \\
\midrule
OLTR \cite{liu2019large} & RN-152 & 58.14M & 30 & 35.9 & 44.7 & 37.0 & 25.3 \\
cRT \cite{kang2020decoupling} & RN-152 & 58.14M & 90+10 & 36.7 & 42.0 & 37.6 & 24.9 \\
LWS \cite{kang2020decoupling} & RN-152 & 58.14M & 90+10 & 37.6 & 40.6 & 39.1 & 28.6 \\
MiSLAS \cite{zhong2021improving} & RN-152 & 58.14M & 90+10 & 40.4 & 39.6 & 43.3 & 36.1 \\
ResLT \cite{cui2022reslt} & RN-152 & 58.14M & 30 & 39.8 & 39.8 & 43.6 & 31.4 \\
DisAlign \cite{zhang2021distribution} & RN-152 & 58.14M & 30 & 39.3 & 40.4 & 42.4 & 30.1 \\
ALA \cite{zhao2022adaptive} & RN-152 & 58.14M & 30 & 40.1 & 43.9 & 40.1 & 32.9 \\
PaCo \cite{cui2021parametric} & RN-152 & 58.14M & 30 & 41.2 & 36.1 & 47.9 & 35.3 \\
LiVT \cite{xu2023learning} & ViT-B/16 & 85.80M & 100 & 40.8 & 48.1 & 40.6 & 27.5 \\
\midrule
\multicolumn{8}{l}{\bf Fine-tuning foundation model (from CLIP)} \\
\midrule
BALLAD \cite{ma2021simple} & ViT-B/16 & 149.62M & 50+10 & 49.5 & 49.3 & 50.2 & 48.4 \\
VL-LTR \cite{tian2022vl}$^\dagger$ & ViT-B/16 & 149.62M & 100 & 50.1 &\bf 54.2 & 48.5 & 42.0 \\
Decoder \cite{wang2023exploring} & ViT-B/16 & 21.26M & $\sim$34 & 46.8 & - & - & - \\
\algo\ (Ours) & ViT-B/16 &\bf 0.18M &\bf 5 & 51.5 & 50.8 & 52.0 & 51.6 \\
\ w/ TTE (Ours) & ViT-B/16 &\bf 0.18M &\bf 5 &\bf 52.1 & 51.5 &\bf 52.8 &\bf 51.7 \\
\midrule
\multicolumn{8}{l}{\bf Fine-tuning foundation model (from ImageNet-21K)} \\
\midrule
RAC \cite{long2022retrieval}$^\dagger$ & ViT-B/16 & 85.80M & 30 & 47.2 & 48.7 & 48.3 & 41.8 \\
LPT \cite{dong2023lpt} & ViT-B/16 & 1.01M & 40+40 & 50.1 & 49.3 & 52.3 & 46.9 \\
\algo\ (Ours) & ViT-B/16 &\bf 0.18M &\bf 5 & 48.1 & 48.4 & 48.7 & 46.3 \\
\ w/ TTE (Ours) & ViT-B/16 &\bf 0.18M &\bf 5 & 49.0 & 49.2 & 49.7 & 47.3 \\
\bottomrule
\end{tabular}
\end{small}
\end{table}

\textbf{Results on Places-LT.} \Cref{table:comp_placeslt} shows that \algo\ achieves significantly superior performance over existing methods on Places-LT. In particular, \algo\ surpasses VL-LTR by 1.4\%, despite its use of external training data. When taking a closer look at the performance on tail classes, we can see that \algo\ achieves particularly significant improvements compared to state-of-the-art methods, \ie, from 48.4 to 51.6. It is worth noting that, when applying to ImageNet-21K pre-trained foundation model, we use class mean features instead of semantic-aware initialization due to the absence of a corresponding text encoder. Although LPT achieves higher performance, it needs to adopt a two-stage framework with a total of 80 epochs (40 in each stage). In comparison, \algo\ requires only 5 training epochs. Moreover, the number of learned parameters is substantially smaller (0.18M).

\begin{table}[!t]
\caption{Comparison with state-of-the-arts methods on iNaturalist 2018. $\dagger$ denotes using external data. Empty cells ``-'' indicate unreported results in their original papers.}
\label{table:comp_inat18}
\setlength{\tabcolsep}{0.12ex} 
\centering
\begin{small}
\begin{tabular}{l|c|c|c|cccc}
\toprule
\bf Methods &\bf Backbone &\makecell{\bf Learned \\ \bf Params.} &\bf Epochs &\bf All &\bf Head &\bf Med. &\bf Tail \\
\midrule
\multicolumn{8}{l}{\bf Training from scratch} \\
\midrule
cRT \cite{kang2020decoupling} & RN-50 & 23.51M & 90+10 & 65.2 & 69.0 & 66.0 & 63.2 \\
LWS \cite{kang2020decoupling} & RN-50 & 23.51M & 90+10 & 65.9 & 65.0 & 66.3 & 65.5 \\
MiSLAS \cite{zhong2021improving} & RN-50 & 23.51M & 200+30 & 71.6 & 73.2 & 72.4 & 70.4 \\
DiVE \cite{he2021distilling} & RN-50 & 23.51M & 90 & 69.1 & 70.6 & 70.0 & 67.6 \\
DisAlign \cite{zhang2021distribution} & RN-50 & 23.51M & 90 & 69.5 & 61.6 & 70.8 & 69.9 \\
ALA \cite{zhao2022adaptive} & RN-50 & 23.51M & 90 & 70.7 & 71.3 & 70.8 & 70.4 \\
RIDE \cite{wang2021longtailed} & RN-50 & 23.51M & 100 & 72.6 & 70.9 & 72.4 & 73.1 \\
\ +CR \cite{ma2023curvature} & RN-50 & 23.51M & 200 & 73.5 & 71.0 & 73.8 & 74.3 \\
\ +OTmix \cite{gao2023enhancing} & RN-50 & 23.51M & 210 & 73.0 & 71.3 & 72.8 & 73.8 \\
BCL \cite{zhu2022balanced} & RN-50 & 23.51M & 100 & 71.8 & - & - & - \\
PaCo \cite{cui2021parametric} & RN-50 & 23.51M & 400 & 73.2 & 70.4 & 72.8 & 73.6 \\
NCL \cite{li2022nested} & RN-50 & 23.51M & 400 & 74.2 & 72.0 & 74.9 & 73.8 \\
GML \cite{suh2023long} & RN-50 & 23.51M & 400 & 74.5 & - & - & - \\
LiVT \cite{xu2023learning} & ViT-B/16 & 85.80M & 100 & 76.1 &\bf 78.9 & 76.5 & 74.8 \\
\midrule
\multicolumn{8}{l}{\bf Fine-tuning foundation model (from CLIP)} \\
\midrule
VL-LTR \cite{tian2022vl}$^\dagger$ & ViT-B/16 & 149.62M & 100 & 76.8 & - & - & - \\
Decoder \cite{wang2023exploring} & ViT-B/16 & 21.26M &\bf $\sim$5 & 59.2 & - & - & - \\
\algo\ (Ours) & ViT-B/16 & 4.75M & 15 & 79.2 & 73.0 & 79.1 & 80.8 \\
\ w/ TTE (Ours) & ViT-B/16 & 4.75M & 15 & 80.5 & 74.4 & 80.4 & 82.3 \\
\midrule
\multicolumn{8}{l}{\bf Fine-tuning foundation model (from ImageNet-21K)} \\
\midrule
RAC \cite{long2022retrieval}$^\dagger$ & ViT-B/16 & 85.80M & 20 & 80.2 & 75.9 & 80.5 & 81.1 \\
LPT \cite{dong2023lpt} & ViT-B/16 &\bf 1.01M & 80+80 & 76.1 & - & - & 79.3 \\
\algo\ (Ours) & ViT-B/16 & 4.75M & 15 & 81.1 & 74.5 & 81.5 & 82.4 \\
\ w/ TTE (Ours) & ViT-B/16 & 4.75M & 15 &\bf 82.1 & 75.4 &\bf 82.3 &\bf 83.6 \\
\bottomrule
\end{tabular}
\end{small}
\end{table}

\textbf{Results on iNaturalist 2018.} We present the results in \Cref{table:comp_inat18}. In general, \algo\ achieves state-of-the-art performance on this challenging dataset across different foundation models. It consistently outperforms a series of existing approaches, including VL-LTR, LPT, and RAC. Although Decoder \cite{wang2023exploring} employs fewer training epochs, its performance trails far behind \algo. Compared to LPT, \algo\ surpasses by 5.0\% in accuracy and reduces the training cost from 160 epochs (80 per stage) to only 15 epochs. Although LPT uses fewer learned parameters, we can adjust the parameters of \algo\ to reach a lower quantity (\eg, reduce the bottleneck dimensionality $r$ to 32, requiring only 0.62M parameters). In this case, \algo\ achieves an accuracy of 80.4\% (without TTE) / 81.5\% (with TTE), which outperforms LPT by at least 4.3\%. In fact, due to the large number of classes in iNaturalist 2018, the classifier itself already comprises 6.25M parameters. Therefore, the parameters introduced by \algo\ do not impose too much overhead.

\begin{table}[!t]
\caption{Comparison with state-of-the-art methods on CIFAR-100-LT with various imbalance ratios. Empty cells ``-'' indicate unreported results in their original papers.}
\label{table:comp_cifar100lt}
\setlength{\tabcolsep}{0.12ex} 
\centering
\begin{small}
\begin{tabular}{l|c|c|c| C{6ex} C{6ex} C{6ex} }
\toprule
\multirow{2}{*}{\bf Methods} & \multirow{2}{*}{\bf Backbone} &\multirow{2}{*}{\makecell{\bf Learned \\ \bf Params.}} & \multirow{2}{*}{\bf Epochs} & \multicolumn{3}{c}{\bf Imbalance Ratio} \\ 
& & & & \bf 100 &\bf 50 &\bf 10 \\
\midrule
\multicolumn{7}{l}{\bf Training from scratch} \\
\midrule
LDAM \cite{cao2019learning} & RN-32 & 0.46M & 200 & 42.0 & 46.6 & 58.7 \\
BBN \cite{zhou2020bbn} & RN-32 & 0.46M & 200 & 42.6 & 47.0 & 59.1 \\
DiVE \cite{he2021distilling} & RN-32 & 0.46M & 200 & 45.4 & 51.1 & 62.0 \\
MiSLAS \cite{zhong2021improving} & RN-32 & 0.46M & 200+10 & 47.0 & 52.3 & 63.2 \\
ResLT \cite{cui2022reslt} & RN-32 & 0.46M & 200 & 45.3 & 50.0 & 60.8 \\
BS \cite{ren2020balanced} & RN-32 & 0.46M & 400 & 50.8 & 54.2 & 63.0 \\
PaCo \cite{cui2021parametric} & RN-32 & 0.46M & 400 & 52.0 & 56.0 & 64.2 \\
BCL \cite{zhu2022balanced} & RN-32 & 0.46M & 200 & 51.9 & 56.6 & 64.9 \\
\midrule
\multicolumn{7}{l}{\bf Fine-tuning foundation model (from CLIP)} \\
\midrule
LiVT \cite{xu2023learning} & ViT-B/16 & 85.80M & 100 & 58.2 & - & 69.2 \\
BALLAD \cite{ma2021simple} & ViT-B/16 & 149.62M & 50+10 & 77.8 & - & - \\
\algo\ (Ours) & ViT-B/16 &\bf 0.10M &\bf 5 & 81.7 & 83.1 & 84.7 \\
\ w/ TTE (Ours) & ViT-B/16 &\bf 0.10M &\bf 5 & 83.4 & 84.3 & 85.9 \\
\midrule
\multicolumn{7}{l}{\bf Fine-tuning foundation model (from ImageNet-21K)} \\
\midrule
LPT \cite{dong2023lpt} & ViT-B/16 & 1.01M & 40+40 & 89.1 & 90.0 & 91.0 \\
\algo\ (Ours) & ViT-B/16 &\bf 0.10M &\bf 5 & 89.3 & 90.1 & 91.6 \\
\ w/ TTE (Ours) & ViT-B/16 &\bf 0.10M &\bf 5 &\bf 89.7 &\bf 90.7 &\bf 92.0 \\
\bottomrule
\end{tabular}
\end{small}
\end{table}

\textbf{Results on CIFAR-100-LT.} The results in \Cref{table:comp_cifar100lt} demonstrate that \algo\ consistently outperforms LiVT, BALLAD, and various training-from-scratch approaches across all imbalance ratios. In particular, our method achieves a 3.9\% accuracy improvement over BALLAD, while requiring only 5 training epochs. When adapting to foundation models pre-trained from ImageNet-21K, we employ class mean features to initialize the classifier due to the lack of a corresponding text encoder. It should be noted that the inherent class overlaps between CIFAR-100-LT and ImageNet-21K\protect\footnotemark[2] can elevate the baseline performance to an exceptionally high level. Despite this, our method maintains its superior performance, surpassing LPT with fewer training epochs and learned parameters.

\footnotetext[2]{Classes in CIFAR-100-LT: \cifarurl; Classes in ImageNet-21K: \imageneturl}

\subsection{Component Analysis and Ablation Study}
\label{sec:component_analysis}


\begin{table}[!t]
\caption{Ablation study of each component in \algo. The baseline involves learning a cosine classifier using LA loss.}
\label{table:ablation}
\setlength{\tabcolsep}{0.25ex} 
\centering
\begin{small}
\begin{tabular}{cccc | cccc | cccc}
\toprule
\multirow{2}{*}{\bf SLF} & \multirow{2}{*}{\bf SAI} & \multirow{2}{*}{\bf MDA} & \multirow{2}{*}{\bf TTE} & \multicolumn{4}{c|}{\bf ImageNet-LT} & \multicolumn{4}{c}{\bf Places-LT} \\ 
& & & &\bf All &\bf Head &\bf Med. &\bf Tail &\bf All &\bf Head &\bf Med. &\bf Tail \\
\midrule
& & & & 69.3 & 77.1 & 71.0 & 41.2 & 43.9 & 49.4 & 47.7 & 25.0 \\
$\checkmark$ & & & & 74.9 & 79.6 & 75.0 & 61.6 & 48.7 & 50.4 & 50.3 & 42.0 \\
$\checkmark$ & $\checkmark$ & & & 76.9 & 79.8 & 76.1 & 71.4 & 51.0 & 50.6 & 51.6 & 50.5 \\
$\checkmark$ & $\checkmark$ & $\checkmark$ & & 77.0 & 79.9 & 76.1 & 71.7 & 51.5 & 50.8 & 52.0 & 51.6 \\
$\checkmark$ & $\checkmark$ & $\checkmark$ & $\checkmark$ &\bf 78.3 &\bf 81.0 &\bf 77.5 &\bf 73.4 &\bf 52.1 &\bf 51.5 &\bf 52.8 &\bf 51.7 \\
\bottomrule
\end{tabular}
\end{small}
\end{table}

\textbf{Effect of Each Component.} To assess the effectiveness of each component, we conduct a systematic ablation study on key components in \algo\, including \romannumeral 1) structured lightweight fine-tuning (SLF), \romannumeral 2) semantic-aware initialization (SAI), \romannumeral 3) minimalist data augmentation (MDA), and \romannumeral 4) test-time ensembling (TTE). The results are presented in \Cref{table:ablation}. Specifically, \romannumeral 1) SLF substantially improves performance across all classes; \romannumeral 2) SAI consistently improves generalization, particularly on tail classes; \romannumeral 3) MDA further boosts model performance. More crucially, MDA significantly reduces training durations, as will be discussed in \Cref{sec:component_analysis}; \romannumeral 4) TTE provides additional performance gains for both head and tail classes.

\begin{figure}[!t]
\centering
\subfloat[ImageNet-LT]{
    \includegraphics[width=0.32\linewidth]{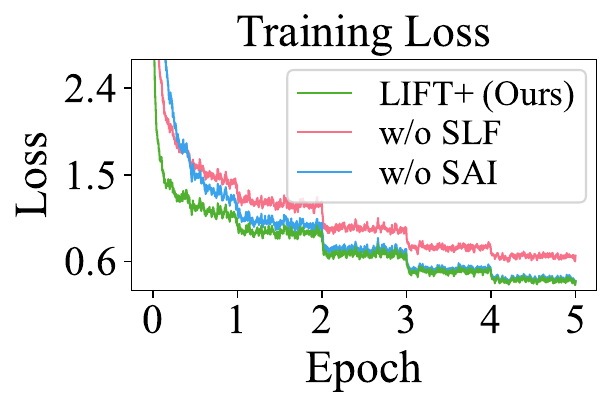}
    \hfill
    \includegraphics[width=0.32\linewidth]{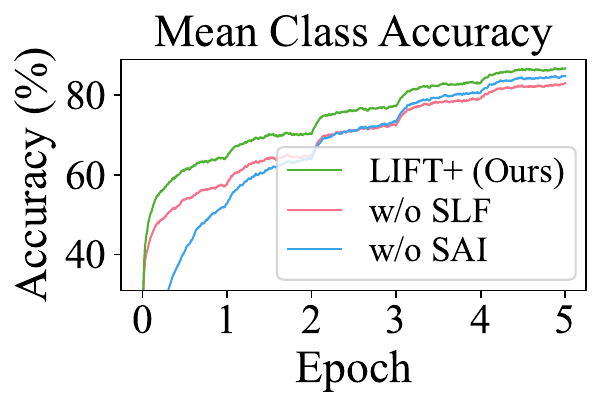}
    \hfill
    \includegraphics[width=0.32\linewidth]{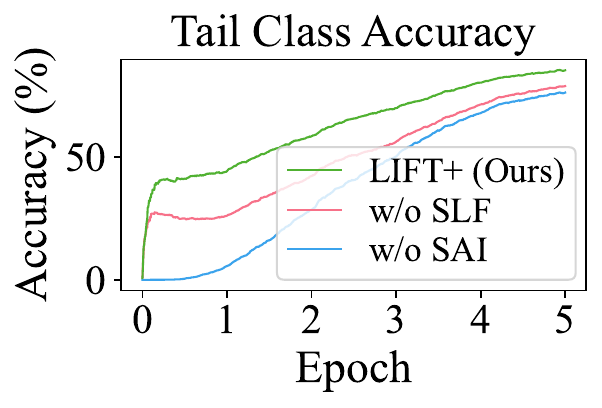}
}
\\
\subfloat[Places-LT]{
    \includegraphics[width=0.32\linewidth]{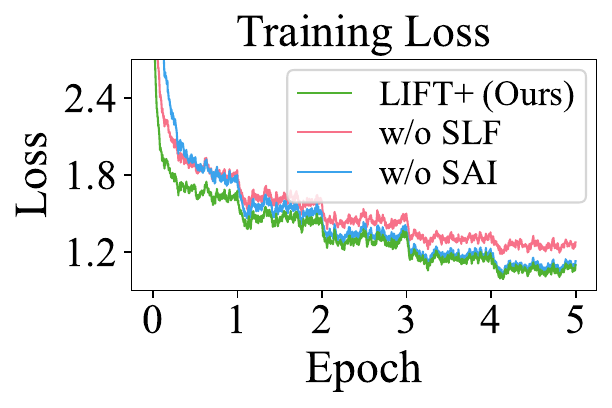}
    \hfill
    \includegraphics[width=0.32\linewidth]{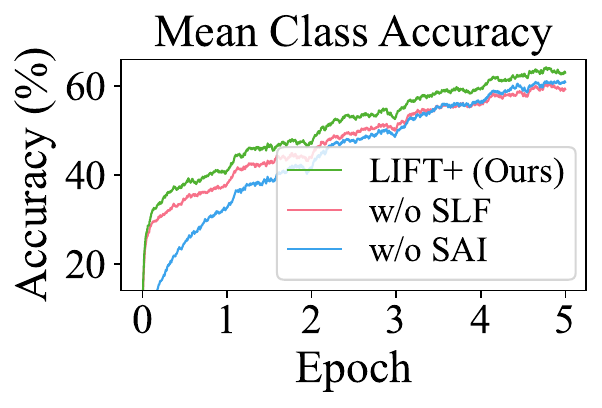}
    \hfill
    \includegraphics[width=0.32\linewidth]{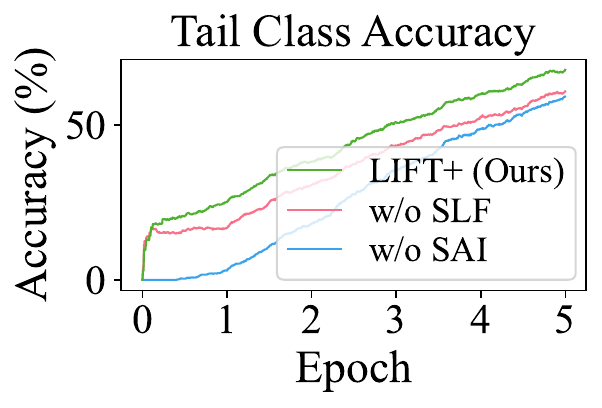}
}
\caption{Convergence curves of training loss and accuracy.}    
\label{fig:convergence}
\end{figure}

\textbf{Impact on Convergence Efficacy.} In \Cref{fig:convergence}, we illustrate the convergence curve of training loss, mean class accuracy, and tail class accuracy. The results show that, without structured lightweight fine-tuning (SLF), the training loss and accuracy converge suboptimally in all cases. Without semantic-aware initialization (SAI), the training loss is slightly affected, while the class accuracy decreases by a large margin, especially on tail classes. These results underscore the significance of SLF and SAI in ensuring optimal convergence.

\begin{figure}[!t]
    \centering
    \includegraphics[width=0.325\linewidth,trim=6 0 6 0]{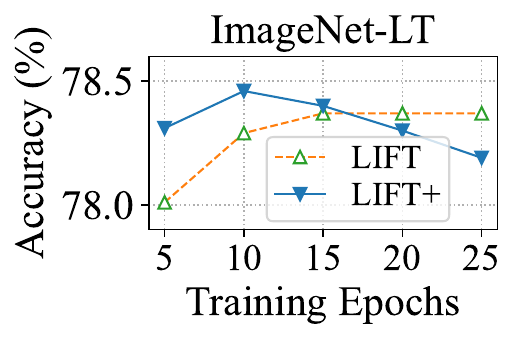}
    \hfill
    \includegraphics[width=0.325\linewidth,trim=6 0 6 0]{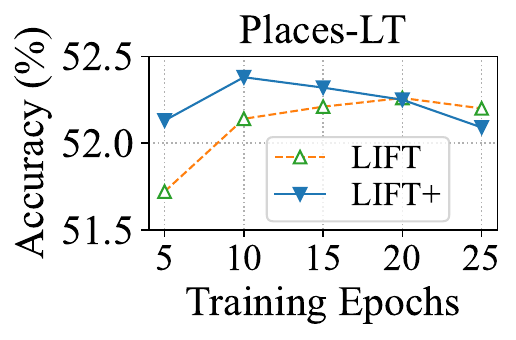}
    \hfill
    \includegraphics[width=0.309\linewidth,trim=6 0 0 0]{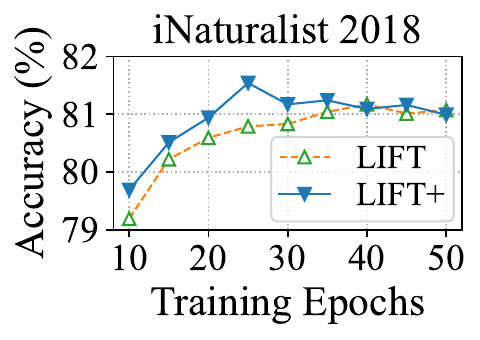}
    \caption{Performance of \algo\ with different training epochs.}
    \label{fig:comp_epochs}
\end{figure}

\textbf{Comparison of Different Training Epochs.}
To investigate the training efficiency of \algo, we run LIFT and \algo\ with different epochs and report the results in \Cref{fig:comp_epochs}. The learning rate for each experiment is modified to $\mathrm{lr}=(\mathrm{base\_epochs}\cdot\mathrm{base\_lr})/\mathrm{epochs}$. On ImageNet-LT and Places-LT, while LIFT maintains high accuracy under prolonged training durations, it struggles to achieve competitive performance within a few training epochs. In contrast, \algo\ demonstrates superior efficiency, attaining an accuracy comparable to LIFT with only 5 epochs.
On iNaturalist 2018, when training for 5 epochs, \algo\ achieves an overall accuracy of 74.9\% (without TTE) / 76.2\% (with TTE), surpassing Decoder \cite{wang2023exploring} by more than 15\%. We default to train \algo\ for 15 epochs on this dataset; however, further performance gains remain attainable. As illustrated in \Cref{fig:comp_epochs}, when training for more epochs (\eg, 25 epochs), \algo\ achieves an additional performance improvement of 1\%. However, this will increase the computational overhead, so we abort this approach in \algo.

\begin{table*}[!t]
\caption{Performance of \algo\ with different losses.}
\label{table:comp_loss}
\setlength{\tabcolsep}{1ex} 
\centering
\begin{small}
\begin{tabular}{L{8ex}| cccc | cccc | cccc}
\toprule
\multirow{2}{*}{\bf Losses} & \multicolumn{4}{c|}{\bf ImageNet-LT} & \multicolumn{4}{c|}{\bf Places-LT}  & \multicolumn{4}{c}{\bf iNaturalist 2018} \\ 
&\bf All &\bf Head &\bf Med. &\bf Tail &\bf All &\bf Head &\bf Med. &\bf Tail &\bf All &\bf Head &\bf Med. &\bf Tail \\
\midrule
CE & 72.1 & 86.0 & 69.0 & 43.3 & 41.8 & 56.3 & 37.6 & 24.8 & 74.7 & 82.8 & 75.8 & 71.2 \\
Focal & 72.2 & 85.3 & 69.3 & 45.3 & 42.3 & 55.8 & 38.2 & 26.7 & 72.4 & 80.9 & 73.7 & 68.4 \\
LDAM & 70.0 &\bf 86.1 & 67.1 & 35.0 & 40.5 &\bf 56.7 & 36.1 & 20.8 & 76.3 &\bf 84.2 & 77.3 & 72.9 \\
CB & 76.8 & 82.2 & 76.3 & 63.5 & 50.0 & 52.0 & 51.3 & 43.0 & 78.6 & 73.5 & 79.1 & 79.3 \\
GRW & 76.9 & 82.2 & 76.3 & 63.6 & 50.0 & 51.9 & 51.5 & 43.0 & 78.7 & 73.7 & 79.2 & 79.4 \\
LADE & 78.0 & 80.6 & 77.1 &\bf 74.0 & 51.9 & 51.7 & 52.1 & 51.5 &\bf 80.9 & 75.5 &\bf 80.7 &\bf 82.4 \\
LA &\bf 78.3 & 81.0 &\bf 77.5 & 73.4 &\bf 52.1 & 51.5 &\bf 52.7 &\bf 51.7 & 80.5 & 74.4 & 80.4 & 82.3 \\
\bottomrule
\end{tabular}
\end{small}
\end{table*}

\textbf{Comparison of Different Losses.} Building upon our theoretical analysis, we propose to optimize the unbiased logit-adjusted loss (LA). To further validate our proposal, we compare the performance of \algo\ with different losses, including cross-entropy loss (CE), focal loss \cite{lin2017focal}, label-distribution-aware margin loss (LDAM) \cite{cao2019learning}, class-balanced loss (CB) \cite{cui2019class}, generalized re-weighting loss (GRW) \cite{zhang2021distribution}, and label distribution disentangling loss (LADE) \cite{hong2021disentangling}. The results are shown in \Cref{table:comp_loss}, where the LA loss achieves the highest average overall accuracy across these three datasets. In contrast, other losses, such as LDAM, CB, and GRW, cannot achieve satisfactory performance in all cases. The LADE performs well on iNaturalist 2018 but is suboptimal on the other two datasets. Moreover, its form is far more complex compared with LA.

\begin{table*}[!t]
\caption{Performance of \algo\ with different classifiers.}
\label{table:comp_classifier}
\setlength{\tabcolsep}{1ex} 
\centering
\begin{small}
\begin{tabular}{l|c| cccc | cccc | cccc}
\toprule
\multicolumn{2}{l|}{\multirow{2}{*}{\bf Classifiers}} & \multicolumn{4}{c|}{\bf ImageNet-LT} & \multicolumn{4}{c|}{\bf Places-LT} & \multicolumn{4}{c}{\bf iNaturalist 2018} \\ 
\multicolumn{2}{l|}{} &\bf All &\bf Head &\bf Med. &\bf Tail &\bf All &\bf Head &\bf Med. &\bf Tail &\bf All &\bf Head &\bf Med. &\bf Tail  \\
\midrule
\multicolumn{2}{l|}{Linear} & 78.2 & 81.1 & 77.3 & 73.1 &\bf 52.1 & 51.7 & 52.4 & 51.9 & 76.7 &\bf 75.7 & 77.7 & 75.8 \\
\multicolumn{2}{l|}{L2-normalized} & 78.3 & 81.0 & 77.2 &\bf 74.6 &\bf 52.1 & 51.4 & 52.5 &\bf 52.5 & 80.1 & 74.9 & 79.9 & 81.7 \\
\multirow{5}{*}{Cosine} & $\sigma=15$ & 75.7 & 80.4 & 76.5 & 59.5 & 49.6 &\bf 52.4 & 53.1 & 36.7 & 76.8 & 73.2 & 77.0 & 77.3 \\
 & $\sigma=20$ & 77.7 & 80.7 & 77.4 & 70.2 & 51.6 & 51.7 &\bf 53.3 & 47.7 & 79.9 & 74.7 & 79.6 & 81.7 \\
 & $\sigma=25$ & 78.3 & 81.0 &\bf 77.5 & 73.4 &\bf 52.1 & 51.5 & 52.7 & 51.7 &\bf 80.5 & 74.4 &\bf 80.4 &\bf 82.3 \\
 & $\sigma=30$ &\bf 78.4 & 81.3 & 77.4 & 73.8 & 51.9 & 51.2 & 52.3 & 52.2 & 80.4 & 75.1 &\bf 80.4 & 81.7 \\
 & $\sigma=35$ & 78.3 &\bf 81.6 & 77.0 & 73.6 & 51.5 & 51.0 & 51.9 & 51.7 & 79.8 & 74.9 & 79.8 & 80.9 \\
\bottomrule
\end{tabular}
\end{small}
\end{table*}

\begin{figure}[!t]
    \centering
    \includegraphics[width=0.32\linewidth,trim=6 0 6 0]{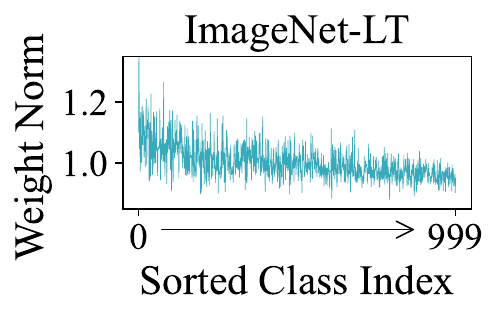}
    \hfill
    \includegraphics[width=0.32\linewidth,trim=6 0 6 0]{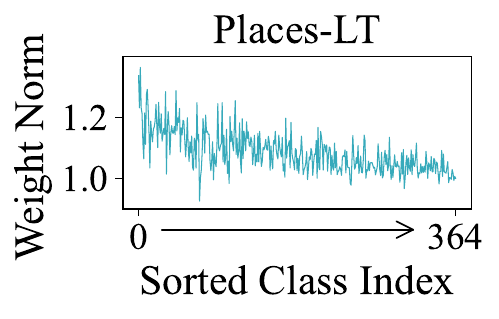}
    \hfill
    \includegraphics[width=0.325\linewidth,trim=6 0 6 0]{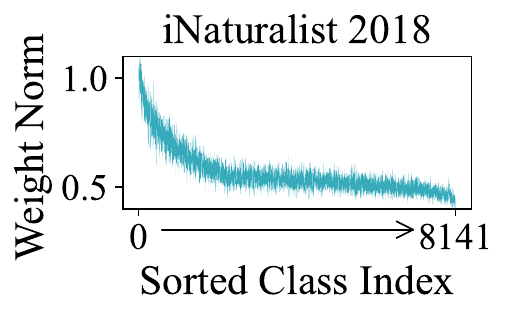}
    \caption{Weight norms of the linear classifier on three long-tail datasets. Classes are sorted by their frequency in the training dataset. On iNaturalist 2018, the weight norms are much more skewed, leading to a suboptimal performance.}
    \label{fig:weight_norm}
\end{figure}

\textbf{Comparison of Different Classifiers.}
We propose to use an enhanced cosine classifier in \algo.
Furthermore, we investigate the linear classifier $z_k=\vw_k^{\top}\vf+b$, the L2-normalized classifier $z_j=\frac{\vw_j^{\top}}{\Vert \vw_j\Vert_2}\vf$, as well as the cosine classifier with $\sigma\in\{15, 20, 25, 30, 35\}$. 
The results in \Cref{table:comp_classifier} show that the linear classifier performs well on ImageNet-LT and Places-LT, but diminishes on the more challenging iNaturalist 2018 dataset. This can be inferred from the skewed distribution of classifier weight norms illustrated in \Cref{fig:weight_norm}.
By removing the impact of weight norms, the L2-normalized classifier and the cosine classifier achieve higher performance, especially on tail classes. Moreover, when adopting the cosine classifier, setting $\sigma$ to $25$ or $30$ leads to superior performance.

\begin{table}[!t]
\caption{Comparison of \algo\ with different classifier initialization methods.}
\label{table:comp_clf_init}
\setlength{\tabcolsep}{0.25ex} 
\centering
\begin{small}
\begin{tabular}{l| cccc | cccc}
\toprule
\multirow{2}{*}{\bf Methods} & \multicolumn{4}{c|}{\bf ImageNet-LT} & \multicolumn{4}{c}{\bf Places-LT} \\ 
&\bf All &\bf Head &\bf Med. &\bf Tail &\bf All &\bf Head &\bf Med. &\bf Tail \\
\midrule
Random & 76.5 & 80.9 & 76.4 & 64.4 & 50.2 & 51.3 & 51.8 & 44.3 \\
Linear probing & 77.1 &\bf 81.8 & 76.4 & 66.3 & 49.9 &\bf 51.5 & 50.8 & 44.8 \\
Class mean features & 77.5 &\bf 81.3 & 76.8 & 69.4 & 51.2 & 51.3 & 52.2 & 48.6 \\
Semantic-aware &\bf 78.3 & 81.0 &\bf 77.5 &\bf 73.4 &\bf 52.1 &\bf 51.5 &\bf 52.8 &\bf 51.7 \\
\bottomrule
\end{tabular}
\end{small}
\end{table}

\textbf{Comparison of Different Initialization Strategies.} In \Cref{table:comp_clf_init}, we evaluate four types of classifier initialization strategies. The linear probing strategy yields limited improvements due to the inherent challenge posed by long-tail data. Using class mean features to initialize the classifier achieves a notable improvement, particularly on tail classes. In \algo, we adopt textual features because they transfer semantic relations between classes during fine-tuning. The results show that our proposed strategy is significantly superior to other methods. This experiment also demonstrates that a good starting point for parameter optimization can lead to a better solution.

\begin{table}[!t]
\caption{Definitions of different scheduling functions.}
\label{tab:mda_func}
\begin{minipage}{0.60\linewidth}
\setlength{\tabcolsep}{1.5ex} 
\centering
\begin{small}
\begin{tabular}{ll}
\toprule
No. of func. & Definitions \\
\midrule
\ding{192} (Minimal) & $g(t)=0$ \\
\ding{193} (Convex) & $g(t)=1-\sqrt{1-t^2}$ \\
\ding{194} (Linear) & $g(t)=t$ \\
\ding{195} (Concave) & $g(t)=\sqrt{1-(1-t)^2}$ \\
\ding{196} (Maximal) & $g(t)=1$ \\
\bottomrule
\end{tabular}
\end{small}
\end{minipage}
\hfill
\begin{minipage}{0.36\linewidth}
\centering
\includegraphics[width=\linewidth,trim=0 6 0 0]{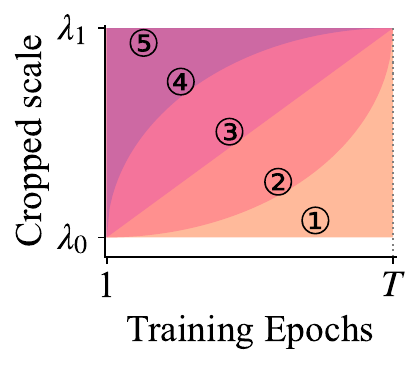}
\end{minipage}
\end{table}

\begin{table}[!t]
\caption{Comparison of \algo\ with different scheduling functions for MDA.}
\label{tab:comp_mda_func}
\setlength{\tabcolsep}{0.23ex} 
\centering
\begin{small}
\begin{tabular}{c| cccc | cccc | cccc}
\toprule
\multirow{2}{*}{\footnotesize\bf No.} & \multicolumn{4}{c|}{\bf ImageNet-LT} & \multicolumn{4}{c|}{\bf Places-LT} & \multicolumn{4}{c}{\bf iNaturalist 2018} \\ 
&\footnotesize\bf All &\footnotesize\bf Head &\footnotesize\bf Med. &\footnotesize\bf Tail &\footnotesize\bf All &\footnotesize\bf Head &\footnotesize\bf Med. &\footnotesize\bf Tail &\footnotesize\bf All &\footnotesize\bf Head &\footnotesize\bf Med. &\footnotesize\bf Tail \\
\midrule
\ding{192} & 78.1 & 80.8 & 77.2 & 73.1 & 51.9 & 51.5 & 52.7 & 50.7 & 80.1 & 73.6 & 79.9 & 82.2 \\
\ding{193} &\bf 78.3 & 81.0 &\bf 77.5 & 73.4 &\bf 52.1 & 51.5 &\bf 52.8 &\bf 51.7 &\bf 80.5 & 74.4 &\bf 80.4 &\bf 82.3 \\
\ding{194} &\bf 78.3 & 81.2 & 77.3 &\bf 73.5 &\bf 52.1 &\bf 51.7 & 52.7 &\bf 51.7 & 80.0 & 74.7 & 79.8 & 81.7 \\
\ding{195} &\bf 78.3 & 81.3 & 77.4 &\bf 73.5 & 52.0 & 51.3 & 52.7 & 51.6 & 79.9 & 75.1 & 79.7 & 81.4 \\
\ding{196} &\bf 78.3 &\bf 81.4 & 77.3 & 73.1 & 51.8 & 51.5 & 52.2 & 51.4 & 79.8 &\bf 75.8 & 79.7 & 80.9 \\
\bottomrule
\end{tabular}
\end{small}
\end{table}

\textbf{Comparison of Different Scheduling Functions.} In the proposed MDA module, the cropped scale range starts from $(\lambda_0, \lambda_1)$ and gradually converges to the maximum value $\lambda_1$ by adding a variable $\Delta\lambda_0$. We compare five different scheduling functions for $\Delta\lambda_0$, including minimal, convex, linear, concave, and maximal functions. The definition and curves are shown in \Cref{tab:mda_func}, and the comparison results are reported in \Cref{tab:comp_mda_func}. On ImageNet-LT and Places-LT, the performance difference is not significant. On iNaturalist 2018, the convex function exhibits an obvious enhancement.

\begin{figure}[!t]
    \centering
    \includegraphics[width=0.32\linewidth, trim=6 0 6 0]{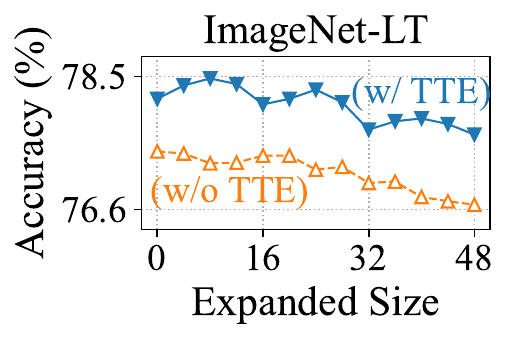}
    \hfill
    \includegraphics[width=0.32\linewidth, trim=6 0 6 0]{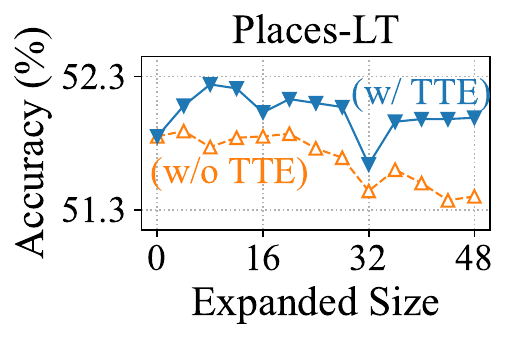}
    \hfill
    \includegraphics[width=0.32\linewidth, trim=6 0 6 0]{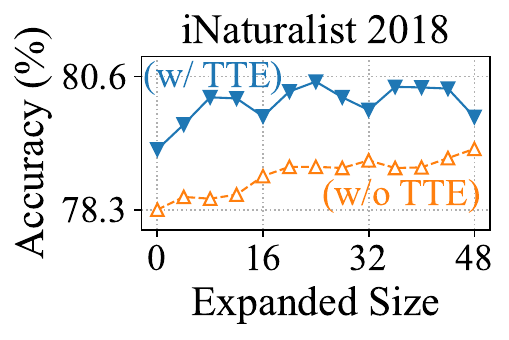}
    \caption{Performance with different expanded size $e$. When applying TTE, setting $e$ to a multiple of the patch size $16$ yields suboptimal improvements.}
    \label{fig:expand_size}
\end{figure}

\textbf{Comparison of Different Expanded Sizes.} To analyze the effect of TTE, we compare different expanded sizes $e$ and report the results in \Cref{fig:expand_size}. Remarkably, TTE consistently improves performance with different expanded sizes. Note that $e=0$ still achieves improvements on ImageNet-LT and iNaturalist 2018, because the aspect ratios of the images in these two datasets are not always 1:1, which enables TTE to generate partially diverse croppings. More crucially, setting the expanded size as a multiple of the patch size $16$ results in suboptimal enhancement, as the generated images will share many repeated patches, thus reducing patch diversity. Generally, setting $e=24$ achieves significant improvements on different datasets.

\subsection{More Advantages of \algo}

\begin{table}[!t]
\caption{Comparison of different fine-tuning methods. All baselines use SAI, MDA, and TTE for a fair comparison.}
\label{table:comp_peft_module}
\setlength{\tabcolsep}{0.22ex} 
\centering
\begin{small}
\begin{tabular}{l|l | cccc | cccc}
\toprule
\multicolumn{2}{l|}{\multirow{2}{*}{\bf Methods}} & \multicolumn{4}{c|}{\bf ImageNet-LT} & \multicolumn{4}{c}{\bf Places-LT} \\ 
\multicolumn{2}{l|}{}&\bf All &\bf Head &\bf Med. &\bf Tail &\bf All &\bf Head &\bf Med. &\bf Tail \\
\midrule
\multicolumn{2}{l|}{Zero-shot CLIP} & 67.9 & 69.0 & 67.3 & 67.2 & 39.9 & 37.5 & 39.1 & 45.9 \\
\multicolumn{2}{l|}{Classifier fine-tuning} & 74.6 & 77.9 & 73.8 & 67.7 & 49.6 & 49.3 & 50.7 & 47.8 \\
\multicolumn{2}{l|}{Full fine-tuning} & 73.2 &\bf 81.3 & 72.8 & 51.9 & 47.5 &\bf 51.9 & 49.4 & 35.0 \\
\multicolumn{2}{l|}{\algo\ (Arbitrary)} & 77.7 & 80.9 & 76.8 & 72.5 & 51.8 & 51.7 & 52.4 & 50.8 \\
\arrayrulecolor{gray}\midrule\arrayrulecolor{black}
\multirow{5}{*}{\makecell[l]{\algo\\~w/}} & BitFit & 77.0 & 79.8 & 76.3 & 71.4 & 51.6 & 50.8 & 52.6 & 50.7 \\
& VPT & 77.6 & 80.1 & 76.8 & 72.9 & 51.9 & 51.4 & 52.7 & 51.2 \\
& Adapter & 78.0 & 81.0 & 77.1 & 72.6 & 51.9 & 51.4 & 52.7 & 51.1 \\
& LoRA & 77.9 & 80.4 & 76.9 &\bf 73.9 & 51.6 & 51.4 & 52.1 & 50.9 \\ 
& AdaptFormer &\bf 78.3 & 81.0 &\bf 77.5 & 73.4 &\bf 52.1 & 51.5 &\bf 52.8 &\bf 51.7 \\
\bottomrule
\end{tabular}
\end{small}
\end{table}

\textbf{Adaptability to Varying Fine-Tuning Methods.} \algo\ is a general framework in which many lightweight fine-tuning methods can be integrated. In addition to zero-shot CLIP, classifier fine-tuning, and full fine-tuning, we test \algo\ with arbitrary lightweight fine-tuning and five structured lightweight methods,
BitFit \cite{zaken2022bitfit}, 
VPT \cite{jia2022visual}, 
Adapter \cite{houlsby2019parameter}, 
LoRA \cite{hu2022lora}, 
and AdaptFormer \cite{chen2022adaptformer}.
The results in \Cref{table:comp_peft_module} demonstrate that arbitrary lightweight fine-tuning surpasses the baseline methods by a large margin across both overall and tail-class performance. This underscores the efficacy of lightweight fine-tuning even in the absence of specific strategies. Furthermore, the integration of structured methods leads to improved performance. Specifically, \algo\ with AdaptFormer performs best on both datasets.

\begin{table}[!t]
\caption{Results on ImageNet-LT with ViT-L/14.}
\label{table:vitl_imagenetlt}
\setlength{\tabcolsep}{0.12ex} 
\centering
\begin{small}
\begin{tabular}{l|c|c|c|cccc}
\toprule
\bf Methods &\bf Backbone &\makecell{\bf Learned \\ \bf Params.} &\bf Epochs &\bf All &\bf Head &\bf Med. &\bf Tail \\
\midrule
CLIP & ViT-L/14 & - & - & 73.6 & 74.6 & 73.1 & 72.3 \\
Decoder \cite{wang2023exploring} & ViT-L/14 & 39.79M & $\sim$18 & 79.3 & - & - & - \\
\algo\ (Ours) & ViT-L/14 &\bf 0.86M &\bf 5 & 82.5 & 84.7 & 81.7 & 78.8 \\
\ w/ TTE (Ours) & ViT-L/14 &\bf 0.86M &\bf 5 &\bf 83.1 &\bf 85.4 &\bf 82.3 &\bf 79.4 \\
\bottomrule
\end{tabular}
\end{small}
\vskip -0.05in
\end{table}

\begin{table}[!t]
\caption{Results on Places-LT with ViT-L/14.}
\label{table:vitl_placeslt}
\setlength{\tabcolsep}{0.12ex} 
\centering
\begin{small}
\begin{tabular}{l|c|c|c|cccc}
\toprule
\bf Methods &\bf Backbone &\makecell{\bf Learned \\ \bf Params.} &\bf Epochs &\bf All &\bf Head &\bf Med. &\bf Tail \\
\midrule
CLIP & ViT-L/14 & - & - & 39.9 & 38.1 & 39.2 & 45.1 \\
Decoder \cite{wang2023exploring} & ViT-L/14 & 39.79M & $\sim$34 & 48.4 & - & - & - \\
\algo\ (Ours) & ViT-L/14 &\bf 0.27M &\bf 5 & 53.3 & 52.3 & 53.9 & 53.6 \\
\ w/ TTE (Ours) & ViT-L/14 &\bf 0.27M &\bf 5 &\bf 53.7 &\bf 52.9 &\bf 54.3 &\bf 54.0 \\
\bottomrule
\end{tabular}
\end{small}
\vskip -0.05in
\end{table}

\begin{table}[!t]
\caption{Results on iNaturalist 2018 with ViT-L/14.}
\label{table:vitl_inat18}
\setlength{\tabcolsep}{0.12ex} 
\centering
\begin{small}
\begin{tabular}{l|c|c|c|cccc}
\toprule
\bf Methods &\bf Backbone &\makecell{\bf Learned \\ \bf Params.} &\bf Epochs &\bf All &\bf Head &\bf Med. &\bf Tail \\
\midrule
CLIP & ViT-L/14 & - & - & 5.8 & 11.1 & 5.4 & 4.9 \\
Decoder \cite{wang2023exploring} & ViT-L/14 & 39.79M &\bf $\sim$5 & 72.3 & 65.5 & 73.2 & 73.0 \\
\algo\ (Ours) & ViT-L/14 &\bf 6.37M & 15 & 84.0 & 79.1 & 83.9 & 85.4 \\
\ w/ TTE (Ours) & ViT-L/14 &\bf 6.37M & 15 &\bf 84.9 &\bf 79.3 &\bf 84.7 &\bf 86.5 \\
\bottomrule
\end{tabular}
\end{small}
\vskip -0.05in
\end{table}

\begin{table}[!t]
\caption{Results on iNaturalist 2018 with ViT-L/14 (336$\times$336 pixels resolution).}
\label{table:vitl_inat18_336}
\setlength{\tabcolsep}{0.12ex} 
\centering
\begin{small}
\begin{tabular}{l|c|c|c|cccc}
\toprule
\bf Methods &\bf Backbone &\makecell{\bf Learned \\ \bf Params.} &\bf Epochs &\bf All &\bf Head &\bf Med. &\bf Tail \\
\midrule
CLIP & \multirow{3}{*}[-0.33ex]{\makecell{ViT-L/14 \\ (336px)}} & - & - & 6.2 & 11.5 & 5.8 & 5.3 \\
\algo\ (Ours) & &\bf 6.37M &\bf 15 & 86.8 & 82.4 & 86.9 & 87.7\\
\ w/ TTE (Ours) & &\bf 6.37M &\bf 15 &\bf 87.3 &\bf 83.4 &\bf 87.2 &\bf 88.3\\
\bottomrule
\end{tabular}
\end{small}
\end{table}

\begin{table}[!t]
\caption{Results on ImageNet-LT with ResNet-50.}
\label{table:resnet_imagenetlt}
\setlength{\tabcolsep}{0.12ex} 
\centering
\begin{small}
\begin{tabular}{l|c|c|c|cccc}
\toprule
\bf Methods &\bf Backbone &\makecell{\bf Learned \\ \bf Params.} &\bf Epochs &\bf All &\bf Head &\bf Med. &\bf Tail \\
\midrule
CLIP & RN-50 & - & - & 57.6 & 58.7 & 56.9 & 56.8 \\
\algo\ w/ SSF & RN-50 & 0.002M & 10 & 65.8 &\bf 70.9 & 65.0 & 53.9 \\
\algo\ w/ BitFit & RN-50 & 0.026M & 10 &\bf 65.9 & 70.1 &\bf 65.1 &\bf 57.2 \\
\bottomrule
\end{tabular}
\end{small}
\vskip -0.05in
\end{table}

\begin{table}[!t]
\caption{Results on Places-LT with ResNet-50.}
\label{table:resnet_placeslt}
\setlength{\tabcolsep}{0.12ex} 
\centering
\begin{small}
\begin{tabular}{l|c|c|c|cccc}
\toprule
\bf Methods &\bf Backbone &\makecell{\bf Learned \\ \bf Params.} &\bf Epochs &\bf All &\bf Head &\bf Med. &\bf Tail \\
\midrule
CLIP & RN-50 & - & - & 35.2 & 33.1 & 34.6 & 40.5 \\
\algo\ w/ SSF & RN-50 & 0.002M & 5 & 45.8 & 46.6 & 47.4 & 40.9 \\
\algo\ w/ BitFit & RN-50 & 0.026M & 10 &\bf 46.6 &\bf 47.3 &\bf 47.9 &\bf 42.2 \\
\bottomrule
\end{tabular}
\end{small}
\end{table}

\textbf{Extension to Different Backbones.} \algo\ can be extended to various backbones. In addition to ViT-B/16, we also assess \algo\ based on the larger ViT-L/14. The results in \Cref{table:vitl_imagenetlt,table:vitl_placeslt,table:vitl_inat18,table:vitl_imagenetlt} show that \algo\ surpasses the state-of-the-art method Decoder \cite{wang2023exploring} by a considerable margin, showcasing improvements of 3.2\% on ImageNet-LT, 4.9\% on Places-LT, and 11.7\% on iNaturalist 2018. Furthermore, the model with higher resolution (336$\times$336 pixels) yields better performance. These results underscore the adaptability of \algo\ across variant backbones.

The adaptation of \algo\ to the widely used ResNet \cite{he2016deep} is also worth exploring. However, due to the absence of a dedicated lightweight fine-tuning method tailored for ResNet, it is challenging to directly integrate \algo\ with ResNet. Despite this limitation, we investigate some intuitive strategies, including \romannumeral 1) incorporating a scaling and shifting (SSF) \cite{lian2022scaling} module after the backbone, and \romannumeral 2) fine-tuning solely the bias terms (BitFit) of ResNet. The results are presented in \Cref{table:resnet_imagenetlt,table:resnet_placeslt}. \algo\ delivers remarkable performance enhancements over zero-shot CLIP. Furthermore, when compared with conventional ResNet-based approaches in \Cref{table:comp_imagenetlt,table:comp_placeslt}, \algo\ achieves superior results with fewer learned parameters and training epochs.

\section{Technical Details of Lightweight Fine-Tuning For Long-Tail Learning}
\label{sec:tech_detail}

\textbf{Preliminary to the Transformer Architecture.} 
The Transformer model architecture comprises an embedding layer followed by a series of Transformer blocks. Specifically, it first partitions an input image $\vx$ into $m$ discrete patches $\{\vx_i\}_{i=1}^m$. These patches are then embedded into sequences of $d$-dimensional tokens $\mE^0=\mathrm{Embedding}([\vx_1;\cdots;\vx_m])\in\sR^{m\times d}$. The embeddings are subsequently passed through $L$ successive Transformer blocks $\{\Phi^l\}_{l=1}^L$ within the model:
\begin{equation}
\label{eq:block}
\begin{split}
\mX^l = \Phi^l(\mX^{l-1})
\left \{
\begin{array}{ll}
\!\hat{\mX}^l=\mathrm{MSA}(\mathrm{LN}(\mX^{l-1}))+\mX^{l-1}\\
\!\mX^l=\mathrm{FFN}(\mathrm{LN}(\hat{\mX}^l))+\hat{\mX}^l\\
\end{array}
\right.
\end{split}
\end{equation}
In practice, $\mX^0$ is normally set as $\mE^0$, and will prepend a learnable token $\vc^0\in\sR^{d}$ when performing classification tasks, \ie, $\mX^0=[\vc^0;\mE^0]$. The feature is extracted from the same position of the last-layer sequence, which is $\phi(\vx)=\mathrm{LN}(\vc^L)$.
Moreover, $\mathrm{MSA}$ represents multi-head self-attention, $\mathrm{FFN}$ indicates feed-forward network, and $\mathrm{LN}$ denotes layer normalization \cite{ba2016layer}. The definitions are
\begin{gather}
\label{eq:msa}
\mathrm{MSA}(\mX)=\mathrm{Concat}_{h=1}^{H}\left(\mathrm{Softmax}\left(\frac{\mQ_h\mK_h^\top}{\sqrt{d/H}}\right)\mV_h\right)\mW^O\notag\\
\mathrm{where}\;\left(\mQ_h,\mK_h,\mV_h\right)=\left(\mX\mW^Q_h,\mX\mW^K_h,\mX\mW^V_h\right)
\end{gather}
\begin{gather}
\label{eq:ffn}
\mathrm{FFN}(\mX)=\mathrm{ReLU}(\mX\mW_1)\mW_2
\end{gather}
\begin{gather}
\label{eq:ln}
\mathrm{LN}(\mX_i)=\frac{\mX_i-\mathrm{E}[\mX_i]}{\sqrt{\mathrm{Var}[\mX_i]}}\cdot\boldsymbol{\gamma}+\boldsymbol{\beta}
\end{gather}
where $H$ is the number of heads, $\mW^Q_h,\mW^K_h,\mW^V_h\in\sR^{d\times \frac{d}{H}}$, $\mW^O\in\sR^{d\times d}$, $\mW_1\in\sR^{d\times 4d}$, $\mW_2\in\sR^{4d\times d}$, $\boldsymbol{\gamma}\in\sR^d$ and $\boldsymbol{\beta}\in\sR^d$ are model parameters. Note that the bias terms are omitted for simplification. 

\begin{table}[!t]
\caption{Components and parameter quantity of ViT.}
\label{table:vit_quantity}
\setlength{\tabcolsep}{1ex} 
\centering
\begin{small}
\begin{tabular}{c|c|c|c}
\toprule
\bf Layers &\bf Components & \bf Variables &\bf \#Params. \\
\midrule
\multirow{3}{*}[-1.33ex]{Embedding} & $\mathrm{Projection}$ & - & $(d_0+1)d$ \\
\cmidrule{2-4}
 & $\mathrm{Class~Token}$ & $\vc^0$ & $d$ \\
\cmidrule{2-4}
 & $\mathrm{Positional}$ & - & $(m+1)d$ \\
\midrule
\multirow{8}{*}[-4.66ex]{\makecell{Block-$1$ \\ ($l=1$)}} & $\mathrm{LN}$ & $\boldsymbol{\gamma},\boldsymbol{\beta}$ & $2d$ \\
\cmidrule{2-4}
 & \multirow{4}{*}[-2ex]{$\mathrm{MSA}$} & $\{\mW^Q_h,\vb^Q_h\}_{h=1}^{H}$ & \multirow{4}{*}[-2ex]{$4d^2+4d$} \\ [1ex]
 & & $\{\mW^K_h,\vb^K_h\}_{h=1}^{H}$ \\ [1ex]
 & & $\{\mW^V_h,\vb^V_h\}_{h=1}^{H}$ \\ [1ex]
 & & $\mW^O,\vb^O$ \\
\cmidrule{2-4}
 & $\mathrm{LN}$ & $\boldsymbol{\gamma},\boldsymbol{\beta}$ & $2d$ \\
\cmidrule{2-4}
 & \multirow{2}{*}[-0.66ex]{$\mathrm{FFN}$} & $\mW_1,\vb_1$ & \multirow{2}{*}[-0.66ex]{$8d^2+5d$} \\ [1ex]
 & & $\mW_2,\vb_2$ \\ 
\midrule
$\vdots$ & $\vdots$ & $\vdots$ & $\vdots$ \\
\midrule
Block-$L$ & $\cdots$ & $\cdots$ & $12d^2+13d$ \\
\midrule
Normalization & $\mathrm{LN}$ & $\boldsymbol{\gamma},\boldsymbol{\beta}$ & $2d$\\
\bottomrule
\end{tabular}
\end{small}
\end{table}

The architectural components and parameter quantities of a Vision Transformer (ViT) are summarized in \Cref{table:vit_quantity}.
The total number of parameters in ViT can be calculated as $12Ld^2+(13L+m+d_0+5)d$, where $L$ represents the number of blocks, $m$ corresponds to the number of image patches, $d_0$ denotes the dimensionality of each image patch, and $d$ indicates the dimensionality of the embedded features. For example, in ViT-B/16 with $224\times 224$ resolution, where $L=12$, $m=\frac{224}{16}\times\frac{224}{16}=196$, $d_0=16\times 16\times 3=768$, and $d=768$, the parameter quantity amounts to approximately 85.80M.
Note that models from different sources (\eg, CLIP or ImageNet-21K pre-trained ViT) may have slight differences due to implementation details, while their results are always within the standard configuration \cite{radford2021clip,dosovitskiy2021an}.

\textbf{\algo\ with Structured Lightweight Fine-Tuning Methods.} In this paper, we explore lightweight fine-tuning by learning only a small set of parameters for adaptation. Concretely, we propose a versatile and inclusive framework \algo, allowing the incorporation of a range of lightweight modules, including but not limited to:
\begin{itemize}
    \item \textit{Bias-terms Fine-tuning} (\textit{BitFit}) \cite{zaken2022bitfit}, which aims to fine-tune only the bias terms of the model. Formally, given a projection $\mX\mW+\vb$, it freezes $\mW$ and optimizes $\vb$.
    \item \textit{Visual Prompt Tuning} (\textit{VPT}) \cite{jia2022visual}, which prepends learnable prompts $\mP^l\in\sR^{p\times d}$ at the entrance of each layer to extend $\mX^l=[\vc^l;\mE^l]$ to $[\vc^l;\mP^l;\mE^l]$. 
    \item \textit{Adapter} \cite{houlsby2019parameter}, which attaches a bottleneck module to $\mathrm{FFN}$ layer to reconstruct $\mathrm{FFN}(\cdot)$ to $\mathrm{Adapter}(\mathrm{FFN}(\cdot))$. The definition is $\mathrm{Adapter}(\mX)=\mathrm{ReLU}(\mathrm{LN}(\mX)\mW_{\text{down}})\mW_{\text{up}}$, where $\mW_{\text{down}}\in\sR^{d\times r}$ and $\mW_{\text{up}}\in\sR^{r\times d}\ (r\ll d)$.
    \item \textit{Low-Rank Adapter} (\textit{LoRA}) \cite{hu2022lora}, which optimizes low-rank matrices $\mW_{\text{down}}$ and $\mW_{\text{up}}$ to reparameterize a matrix $\mW$ to $\mW+\mW_{\text{down}}\mW_{\text{up}}$. In practice, it is often applied to the weights in the $\mathrm{MSA}$ module, such as $\mW_Q$ and $\mW_V$. 
    \item \textit{AdaptFormer} \cite{chen2022adaptformer}, which extends the sequential Adapter to a parallel one. Formally, it computes $s\cdot\mathrm{Adapter}(\hat{\mX}^l)$ and adds it to $\mX^l$ in \Cref{eq:block}, where $s$ is a scaling parameter that can be either specified or learnable\protect\footnotemark[3].
\end{itemize}
\footnotetext[3]{The usage of learnable scaling parameter is not proposed in the main paper but is implemented in their source code: \footnotesize\adaptformerurl}

\begin{table}[!t]
\caption{Parameter quantities for structured lightweight fine-tuning modules in a Transformer block.}
\label{table:peft_quantity}
\setlength{\tabcolsep}{0.7ex} 
\centering
\begin{small}
\begin{tabular}{c|c|c|c}
\toprule
\bf Modules &\bf Components & \bf Variables &\bf \#Params. \\
\midrule
\multirow{8}{*}[-4.66ex]{BitFit} & $\mathrm{LN\text{-}bias}$ & $\boldsymbol{\beta}$ & $d$ \\
\cmidrule{2-4}
 &\multirow{4}{*}[-2ex]{$\mathrm{MSA\text{-}bias}$} & $\{\vb^Q_h\}_{h=1}^{H}$ & \multirow{4}{*}[-2ex]{$4d$} \\ [1ex]
 & & $\{\vb^K_h\}_{h=1}^{H}$ & \\ [1ex]
 & & $\{\vb^V_h\}_{h=1}^{H}$ & \\ [1ex]
 & & $\vb^O$ & \\
\cmidrule{2-4}
 & $\mathrm{LN\text{-}bias}$ & $\boldsymbol{\beta}$ & $d$ \\
\cmidrule{2-4}
 & \multirow{2}{*}[-0.66ex]{$\mathrm{FFN\text{-}bias}$} & $\vb^{l}_1$ & \multirow{2}{*}[-0.66ex]{$5d$} \\ [1ex]
 & & $\vb^{l}_2$ & \\ 
\midrule
VPT & $\mathrm{Prompts}$ & $\mP^l$ & $pd$ \\
\midrule
\multirow{3}{*}[-1.33ex]{Adapter} & $\mathrm{LN}$ & $\boldsymbol{\gamma},\boldsymbol{\beta}$ & $2d$ \\
\cmidrule{2-4}
 & \multirow{2}{*}[-0.66ex]{$\mathrm{Projection}$} & $\mW_{\text{down}}, \vb_{\text{down}}$ & \multirow{2}{*}[-0.66ex]{$(2r+1)d+r$} \\ [1ex]
 & & $\mW_{\text{up}}, \vb_{\text{up}}$ & \\
\midrule
\multirow{4}{*}[-2ex]{LoRA} & \multirow{4}{*}[-2ex]{$\mathrm{Projection}$} & $\mW_{\text{down}}$ (for $\mW_Q$) & \multirow{4}{*}[-2ex]{$4rd$} \\ [1ex]
 & & $\mW_{\text{up}}$ (for $\mW_Q$) \\ [1ex]
 & & $\mW_{\text{down}}$ (for $\mW_V$) \\ [1ex]
 & & $\mW_{\text{up}}$ (for $\mW_V$) \\
\midrule
\multirow{4}{*}[-2ex]{AdaptFormer} & $\mathrm{LN}$ & $\boldsymbol{\gamma},\boldsymbol{\beta}$ & $2d$ \\
\cmidrule{2-4}
 & \multirow{2}{*}[-0.66ex]{$\mathrm{Projection}$} & $\mW_{\text{down}}, \vb_{\text{down}}$ & \multirow{2}{*}[-0.66ex]{$(2r+1)d+r$} \\ [1ex]
 & & $\mW_{\text{up}}, \vb_{\text{up}}$ & \\
\cmidrule{2-4}
 & $\mathrm{Scaling}$ & $s$ & $1$ \\
\bottomrule
\end{tabular}
\end{small}
\end{table}

We summarize the parameter quantities of the above lightweight fine-tuning modules in \Cref{table:peft_quantity}. For all modules, the quantities are at the polynomial level of $d$. Note that the prompt length $p$ for VPT and the bottleneck dimensionality $r$ for Adapter, LoRA, and AdaptFormer are much smaller than $d$. Compared to a standard Transformer block, these lightweight fine-tuning modules are significantly low-complexity ($\gO(d)$ vs. $\gO(d^2)$).

In addition to the lightweight fine-tuning module, a task-specific classifier is required, which contains approximately $Kd$ parameters (omitting bias terms), where $K$ represents the number of classes. In \algo, we set the bottleneck dimensionality $r=2^{\lfloor\log_{2}{(\frac{K}{2L})}\rfloor}\leq\frac{K}{2L}$ for the AdaptFormer module, so that the total parameter quantity is $L\cdot 2rd\leq Kd$ (ignoring constant terms). As a result, it learns even fewer parameters than the classifier.

\begin{figure}[!t]
\centering
\subfloat[CLIP.]{
    \includegraphics[width=0.144\linewidth,trim=12 0 12 12]{figures/heatmap_imagenet_lt_zs.pdf}
    \includegraphics[width=0.144\linewidth,trim=12 12 12 -12]{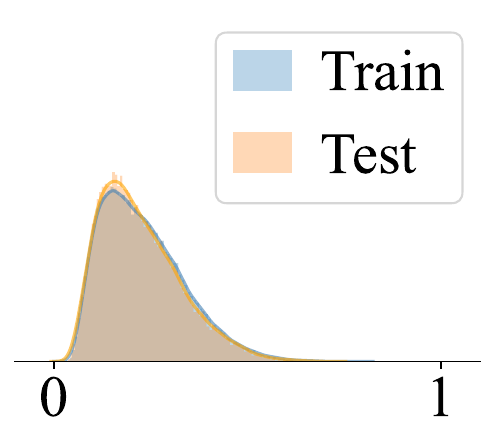}
    \includegraphics[width=0.144\linewidth,trim=12 12 12 -12]{figures/class_scatter_imagenet_lt_zs_tail.pdf}
}
\hfill
\subfloat[Full fine-tuning.]{
    \includegraphics[width=0.144\linewidth,trim=12 0 12 12]{figures/heatmap_imagenet_lt_fft.pdf}
    \includegraphics[width=0.144\linewidth,trim=12 12 12 -12]{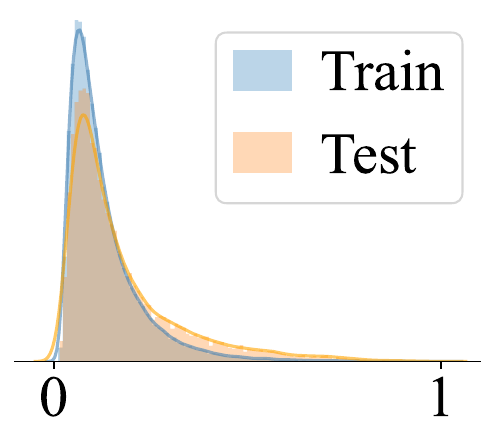}
    \includegraphics[width=0.144\linewidth,trim=12 12 12 -12]{figures/class_scatter_imagenet_lt_fft_tail.pdf}
}
\\
\subfloat[Arbitrary lightweight ft.]{
    \includegraphics[width=0.144\linewidth,trim=12 0 12 12]{figures/heatmap_imagenet_lt_aft.pdf}
    \includegraphics[width=0.144\linewidth,trim=12 12 12 -12]{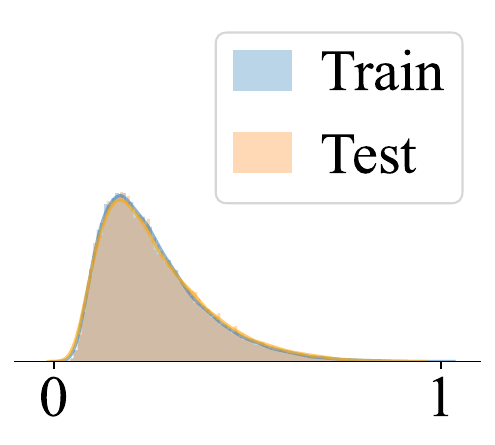}
    \includegraphics[width=0.144\linewidth,trim=12 12 12 -12]{figures/class_scatter_imagenet_lt_aft_tail.pdf}
}
\hfill
\subfloat[BitFit.]{
    \includegraphics[width=0.144\linewidth,trim=12 0 12 12]{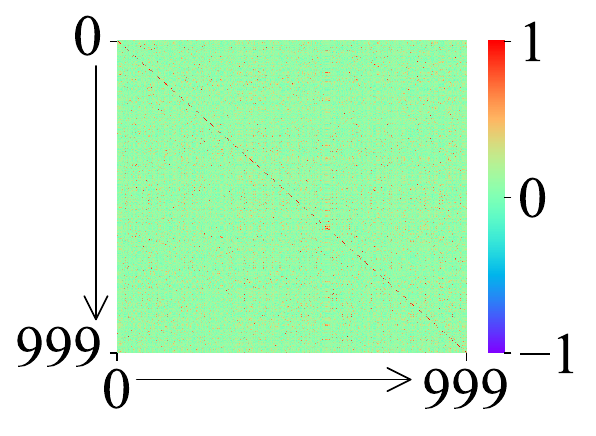}
    \includegraphics[width=0.144\linewidth,trim=12 12 12 -12]{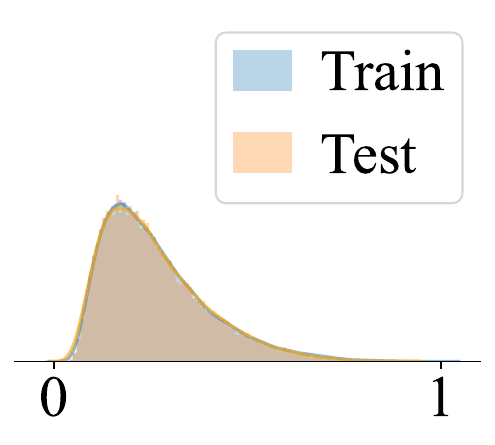}
    \includegraphics[width=0.144\linewidth,trim=12 12 12 -12]{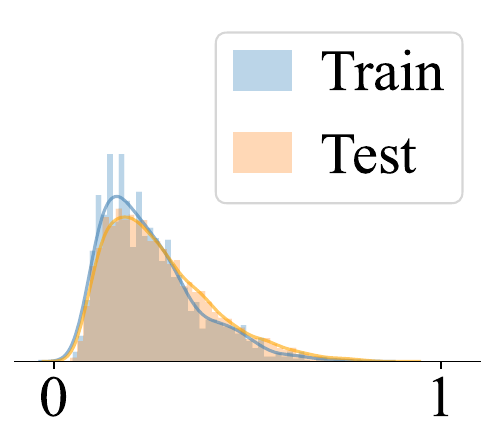}
}
\\
\subfloat[VPT-deep.]{
    \includegraphics[width=0.144\linewidth,trim=12 0 12 12]{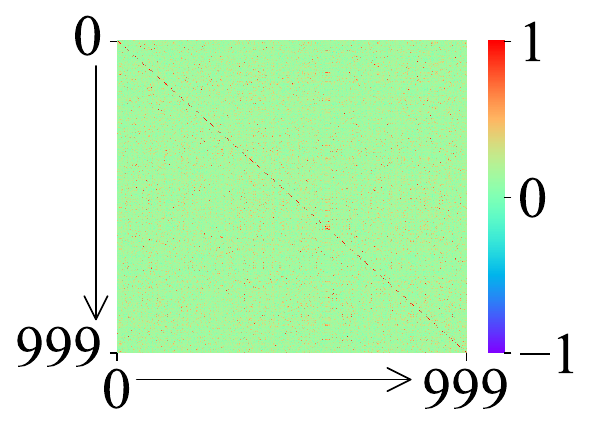}
    \includegraphics[width=0.144\linewidth,trim=12 12 12 -12]{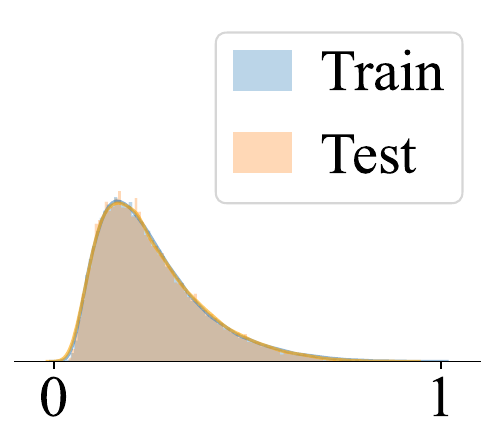}
    \includegraphics[width=0.144\linewidth,trim=12 12 12 -12]{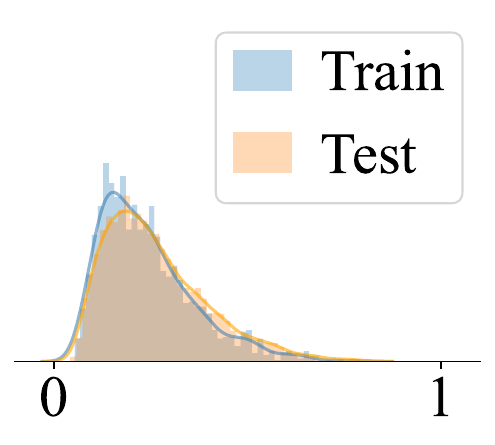}
}
\hfill
\subfloat[LoRA.]{
    \includegraphics[width=0.144\linewidth,trim=12 0 12 12]{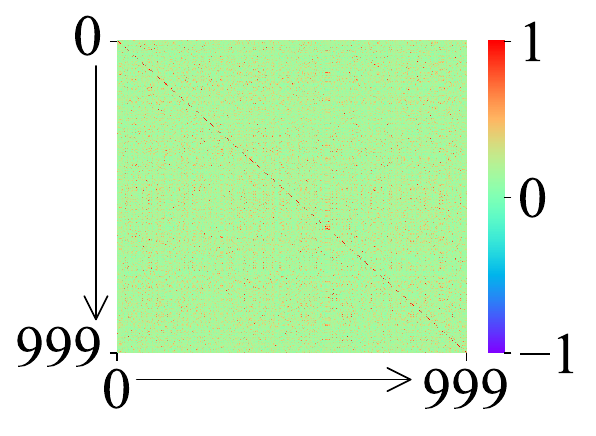}
    \includegraphics[width=0.144\linewidth,trim=12 12 12 -12]{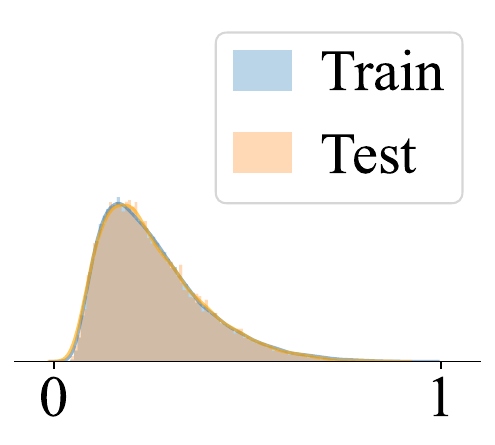}
    \includegraphics[width=0.144\linewidth,trim=12 12 12 -12]{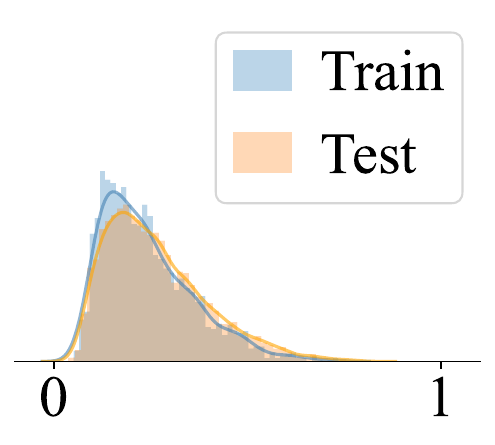}
}
\\
\subfloat[Adapter.]{
    \includegraphics[width=0.144\linewidth,trim=12 0 12 12]{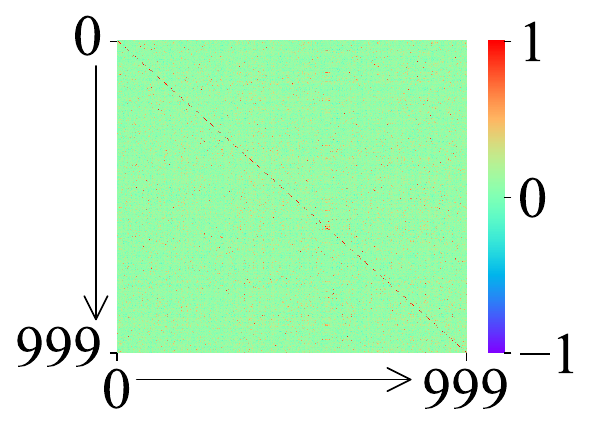}
    \includegraphics[width=0.144\linewidth,trim=12 12 12 -12]{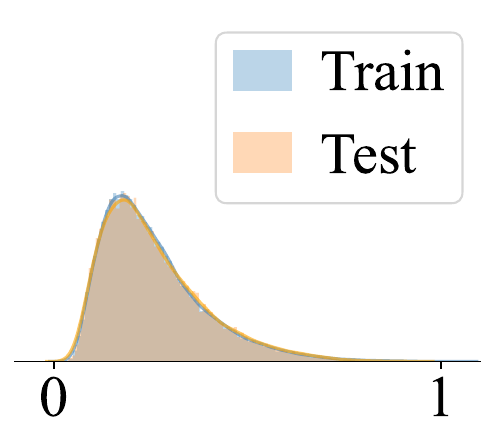}
    \includegraphics[width=0.144\linewidth,trim=12 12 12 -12]{figures/class_scatter_imagenet_lt_lp_adapter_head.pdf}
}
\hfill
\subfloat[AdaptFormer.]{
    \includegraphics[width=0.144\linewidth,trim=12 0 12 12]{figures/heatmap_imagenet_lt_lift.pdf}
    \includegraphics[width=0.144\linewidth,trim=12 12 12 -12]{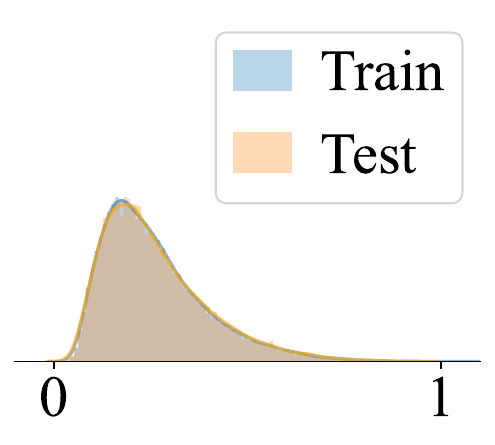}
    \includegraphics[width=0.144\linewidth,trim=12 12 12 -12]{figures/class_scatter_imagenet_lt_lift_tail.pdf}
}
\caption{Visualization of the inter-class feature similarities (the heatmaps) and intra-class distance distributions from head classes (the left histograms) and tail classes (the right histograms) on ImageNet-LT.}
\label{fig:heatmap-and-scatter-imagenetlt}
\end{figure}

\begin{figure}[!t]
\centering
\subfloat[CLIP.]{
    \includegraphics[width=0.144\linewidth,trim=12 0 12 12]{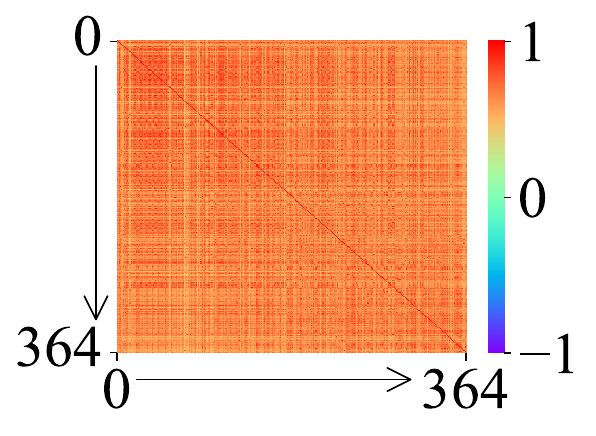}
    \includegraphics[width=0.144\linewidth,trim=12 12 12 -12]{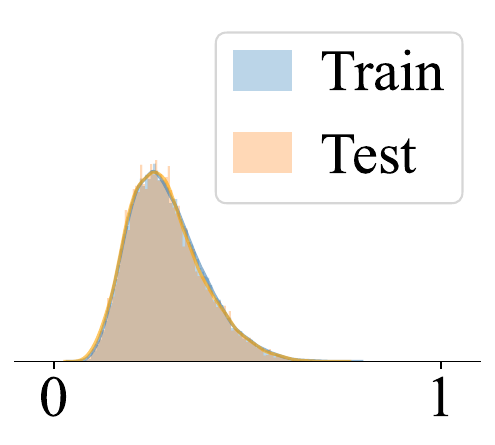}
    \includegraphics[width=0.144\linewidth,trim=12 12 12 -12]{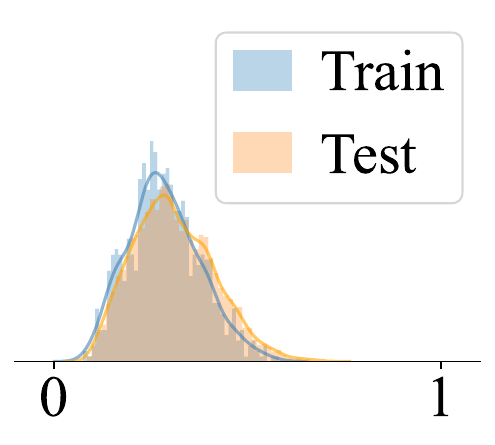}
}
\hfill
\subfloat[Full fine-tuning.]{
    \includegraphics[width=0.144\linewidth,trim=12 0 12 12]{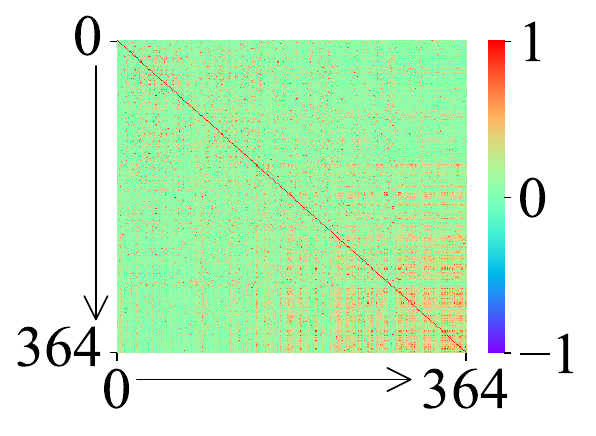}
    \includegraphics[width=0.144\linewidth,trim=12 12 12 -12]{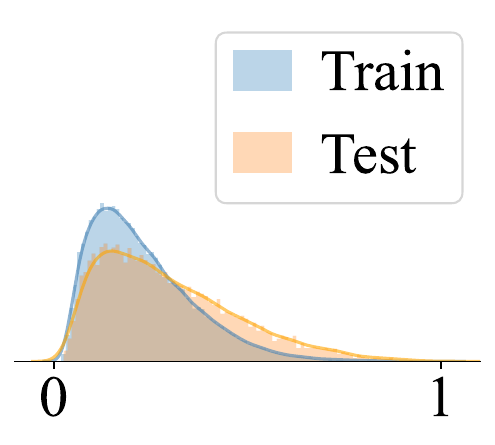}
    \includegraphics[width=0.144\linewidth,trim=12 12 12 -12]{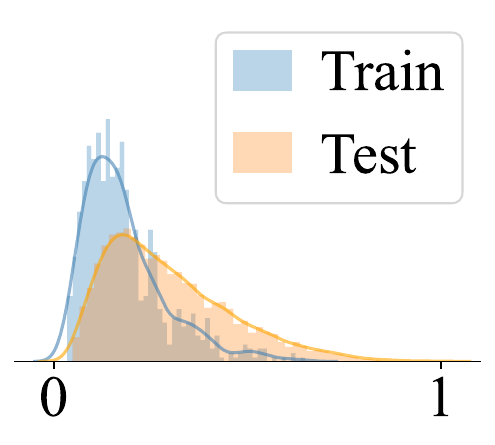}
}
\\
\subfloat[Arbitrary lightweight ft.]{
    \includegraphics[width=0.144\linewidth,trim=12 0 12 12]{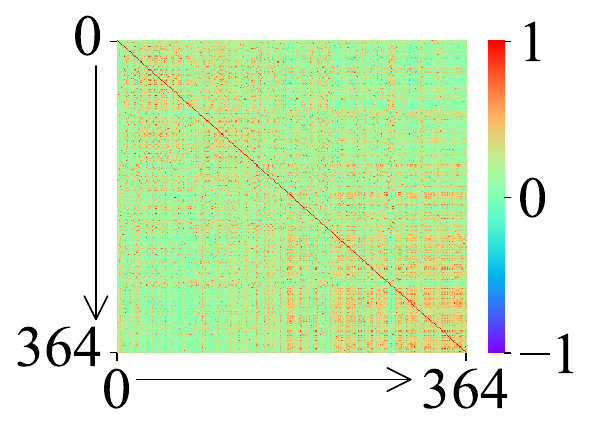}
    \includegraphics[width=0.144\linewidth,trim=12 12 12 -12]{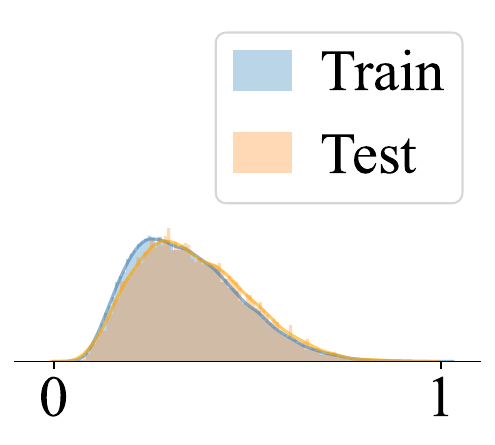}
    \includegraphics[width=0.144\linewidth,trim=12 12 12 -12]{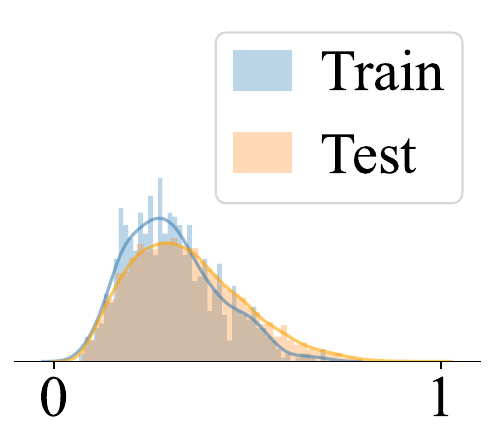}
}
\hfill
\subfloat[BitFit.]{
    \includegraphics[width=0.144\linewidth,trim=12 0 12 12]{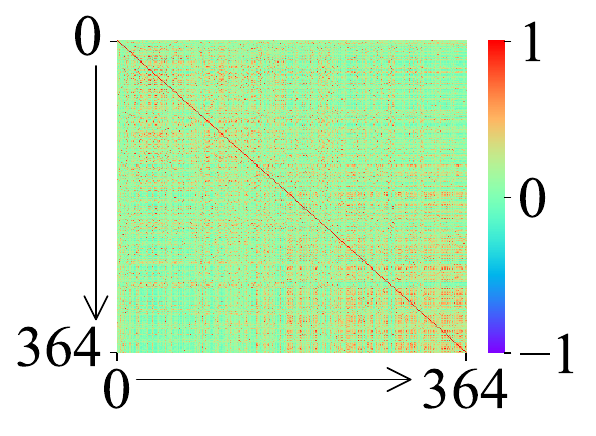}
    \includegraphics[width=0.144\linewidth,trim=12 12 12 -12]{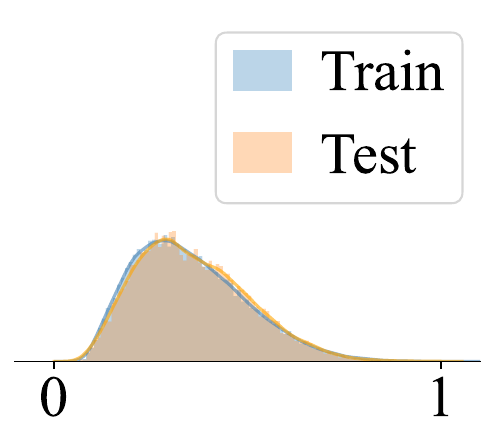}
    \includegraphics[width=0.144\linewidth,trim=12 12 12 -12]{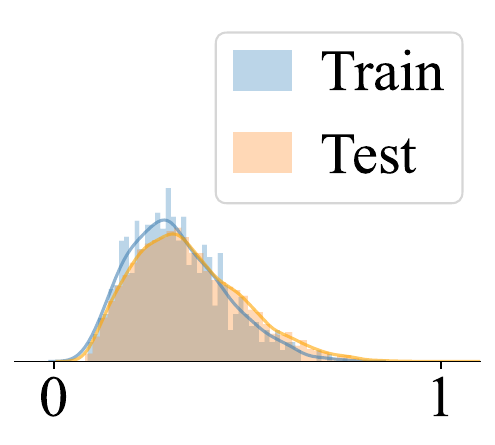}
}
\\
\subfloat[VPT-deep.]{
    \includegraphics[width=0.144\linewidth,trim=12 0 12 12]{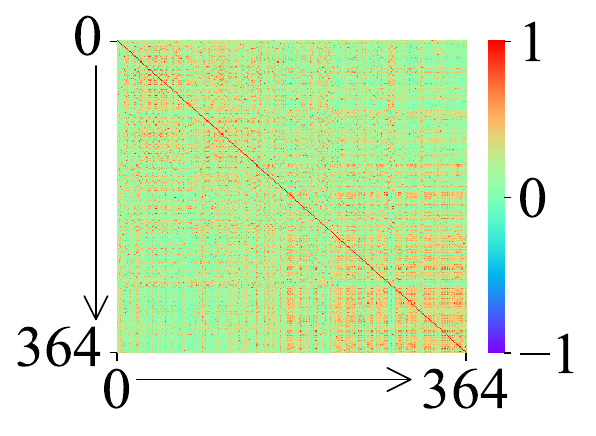}
    \includegraphics[width=0.144\linewidth,trim=12 12 12 -12]{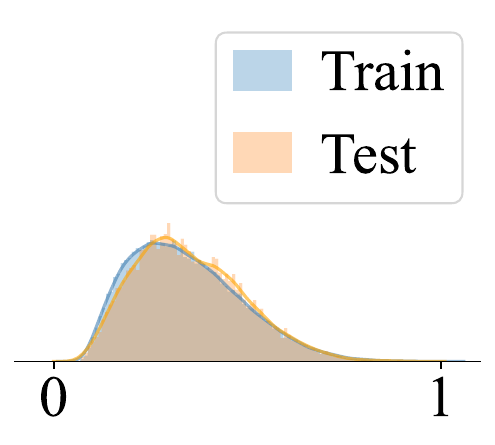}
    \includegraphics[width=0.144\linewidth,trim=12 12 12 -12]{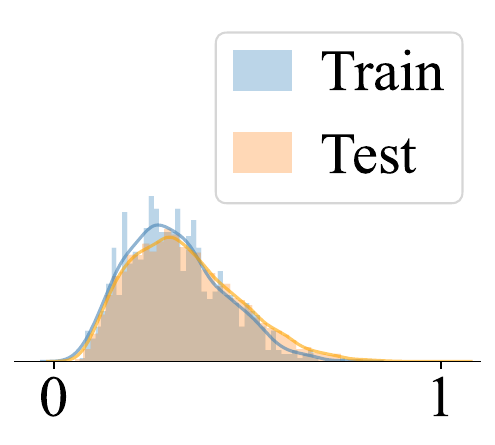}
}
\hfill
\subfloat[LoRA.]{
    \includegraphics[width=0.144\linewidth,trim=12 0 12 12]{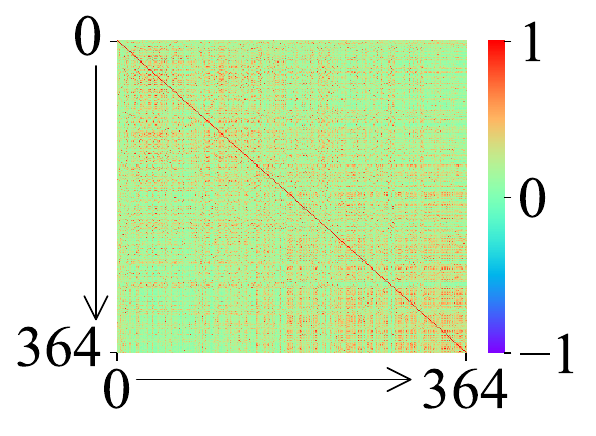}
    \includegraphics[width=0.144\linewidth,trim=12 12 12 -12]{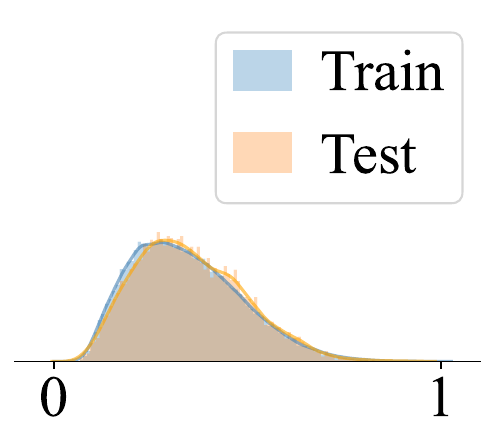}
    \includegraphics[width=0.144\linewidth,trim=12 12 12 -12]{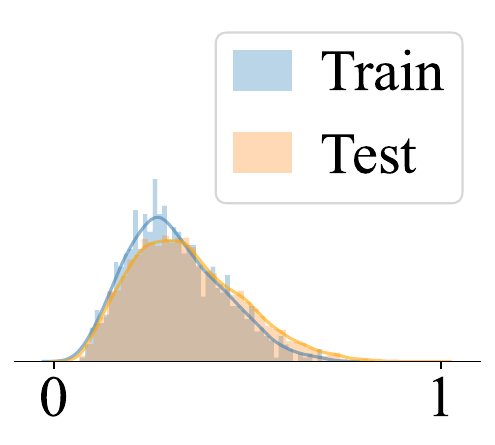}
}
\\
\subfloat[Adapter.]{
    \includegraphics[width=0.144\linewidth,trim=12 0 12 12]{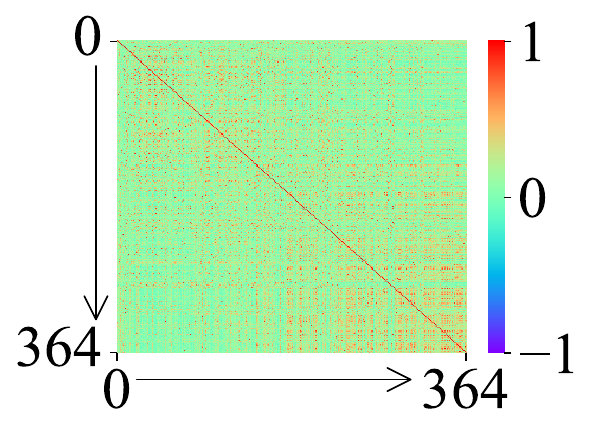}
    \includegraphics[width=0.144\linewidth,trim=12 12 12 -12]{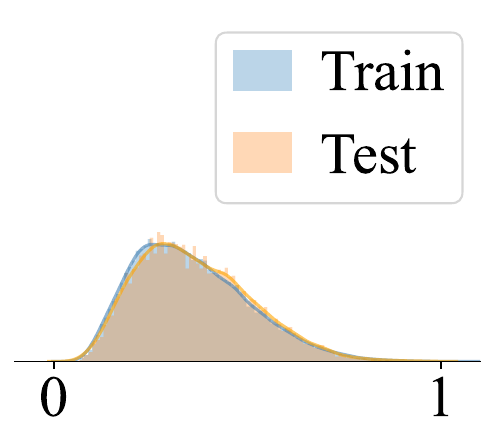}
    \includegraphics[width=0.144\linewidth,trim=12 12 12 -12]{figures/class_scatter_places_lt_lp_adapter_head.pdf}
}
\hfill
\subfloat[AdaptFormer.]{
    \includegraphics[width=0.144\linewidth,trim=12 0 12 12]{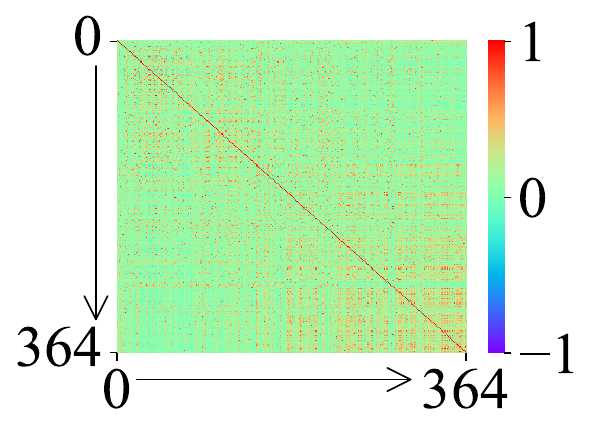}
    \includegraphics[width=0.144\linewidth,trim=12 12 12 -12]{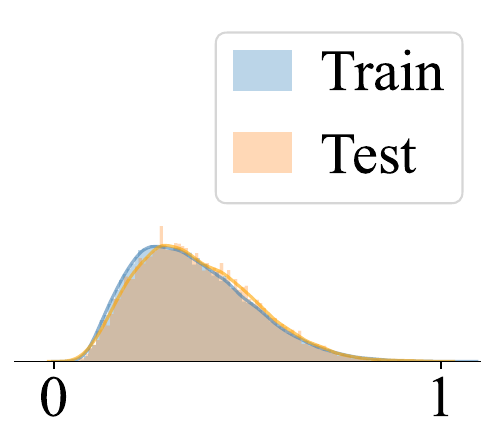}
    \includegraphics[width=0.144\linewidth,trim=12 12 12 -12]{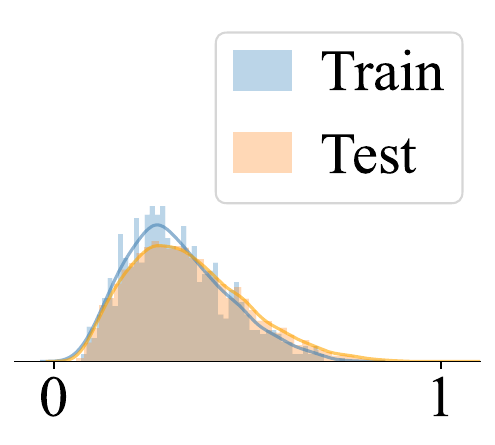}
}
\caption{Visualization of the inter-class feature similarities (the heatmaps) and intra-class distance distributions from head classes (the left histograms) and tail classes (the right histograms) on Places-LT.}
\label{fig:heatmap-and-scatter-placeslt}
\end{figure}

\textbf{Learned Representations of Different Fine-Tuning Methods.}
In \Cref{fig:heatmap-and-scatter-imagenetlt,fig:heatmap-and-scatter-placeslt}, we visualize the inter-class feature separabilities and intra-class distance distributions based on the representation learned by (a) original CLIP, (b) full fine-tuning, (c) arbitrary lightweight fine-tuning, and (d-i) structured lightweight fine-tuning methods.
Compared to the original CLIP, all of these lightweight fine-tuning methods yield more discriminative representations, achieving feature separability comparable to full fine-tuning. More importantly, in contrast to full fine-tuning, lightweight methods preserve undistorted intra-class distributions. For both head and tail classes, the features of training and test data exhibit nearly overlapping distributions. This property contributes to their stable performance improvements, particularly on tail classes, as demonstrated in \Cref{table:comp_peft_module}.

\begin{figure}[!t]
    \centering
    \includegraphics[width=0.32\linewidth,trim=12 0 12 0]{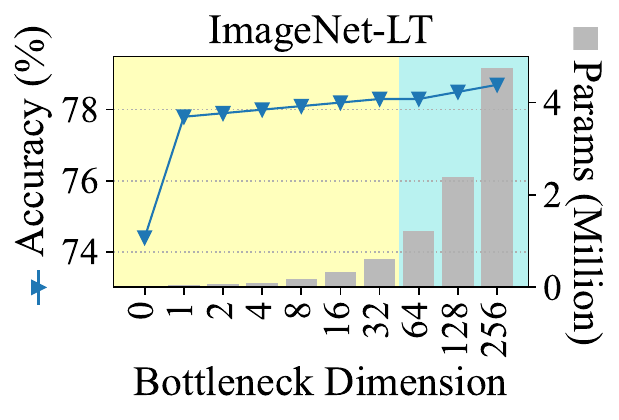}
    \hfill
    \includegraphics[width=0.32\linewidth,trim=12 0 12 0]{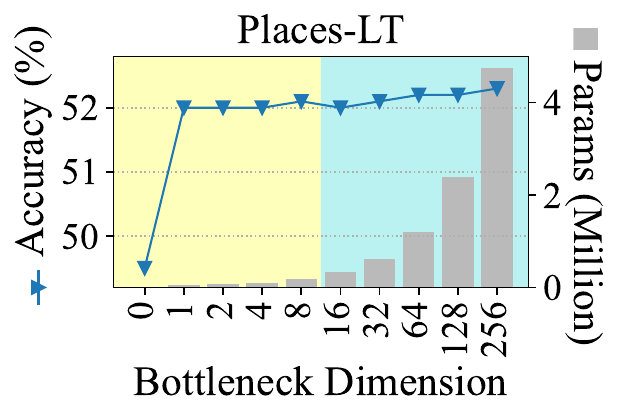}
    \hfill
    \includegraphics[width=0.32\linewidth,trim=12 0 12 0]{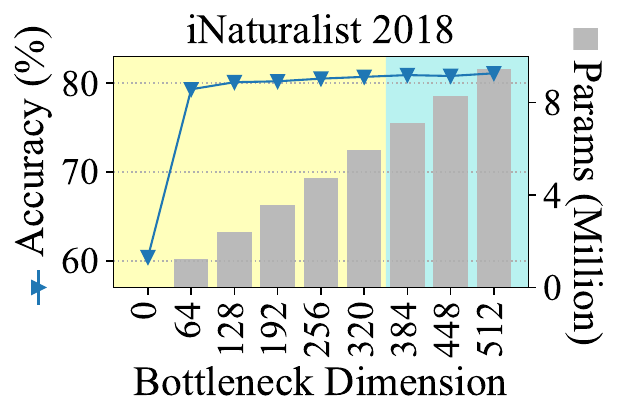}
    \caption{Comparative analysis of learned parameters by adjusting the bottleneck dimensionality $r$. In the yellow region, the incorporated module contains fewer learned parameters than the classifier. The blue region is just the opposite.}
    \label{fig:bottle_dim}
\end{figure}

\textbf{Impact of the Quantity of Learned Parameters.} In \algo, the number of learned parameters can be flexibly adjusted. We investigate the impact of parameter quantity by adjusting bottleneck dimensionality $r$ and present the results in \Cref{fig:bottle_dim}. The results reveal that the performance is robust to the adjustment of dimensionalities. Generally, when the lightweight parameters approximate the classifier parameters, the model achieves adequate improvements without incurring substantial computational overhead.

\begin{figure}[!t]
    \centering
    \includegraphics[width=0.325\linewidth]{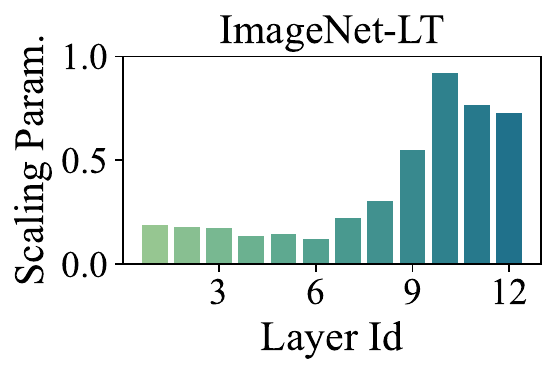}
    \hfill
    \includegraphics[width=0.325\linewidth]{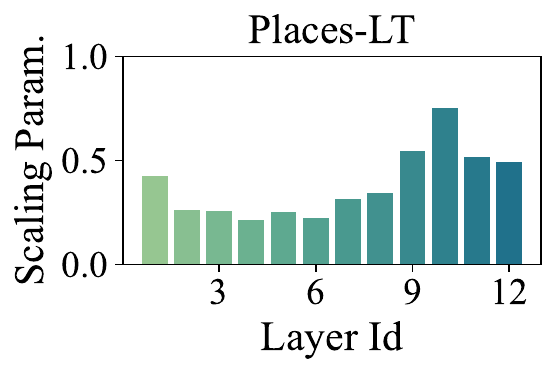}
    \hfill
    \includegraphics[width=0.313\linewidth]{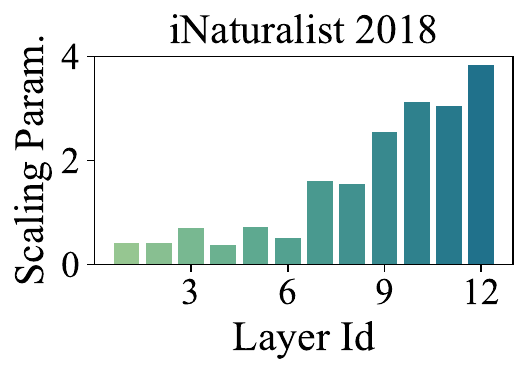}
    \caption{The learned scaling parameters of the AdaptFormer modules in different layers. \algo\ with AdaptFormer adaptively learns a distinct scaling parameter for each layer.}
    \label{fig:adaptformer_scale}
\end{figure}

\textbf{Advantage of \algo\ with AdaptFormer.} 
In each layer, the output of the AdaptFormer module can be multiplied by a learnable scaling parameter $s$ before being added to the corresponding block. Therefore, we can compare the values of $s$ to analyze the effects of the module for different layers. The comparison results are presented in \Cref{fig:adaptformer_scale}. It is inspiring that \algo\ with AdaptFormer adaptively learns suitable scaling parameters for different layers. For example, the values of the last layers tend to be larger, which indicates that the adaptations of the last several layers contribute more significantly to downstream classification tasks.

\begin{figure}[!t]
    \centering
    \includegraphics[width=0.9\linewidth]{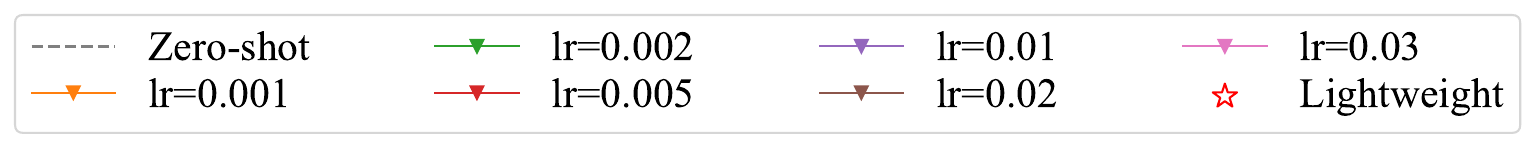}
    \\
    \includegraphics[width=0.32\linewidth]{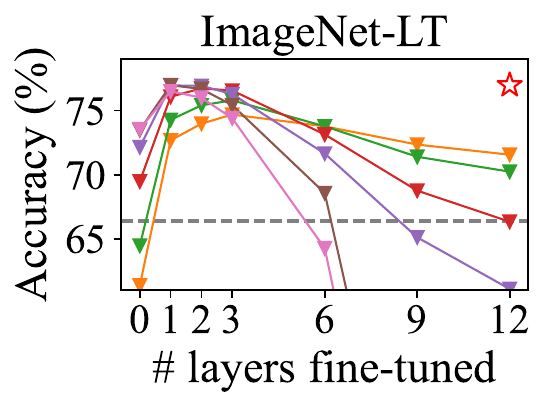}
    \hfill
    \includegraphics[width=0.32\linewidth]{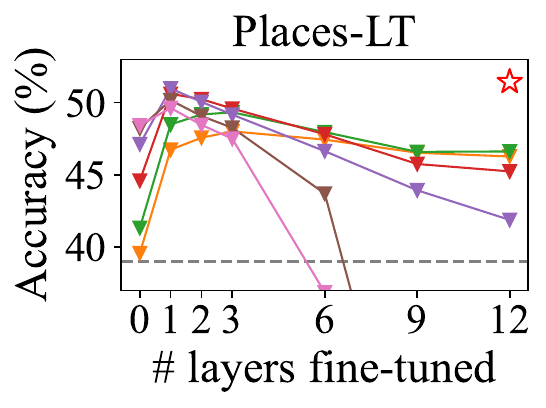}
    \hfill
    \includegraphics[width=0.32\linewidth]{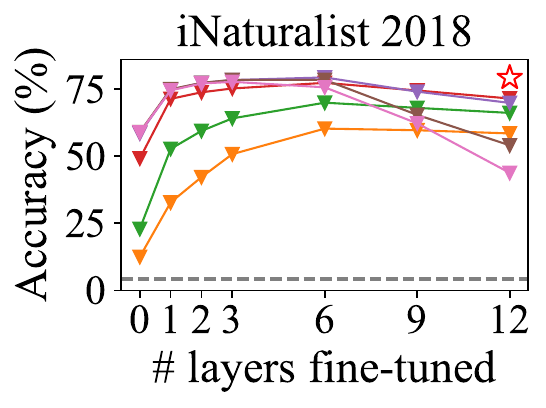}
    \hfill
    \caption{Partially fine-tuning the last $k$ layers.
    Similar to full fine-tuning, we search the learning rate from \{0.03, 0.02, 0.01, 0.005, 0.002, 0.001\}. For \algo, the learning rate is fixed to 0.02. The optimal learning rate and fine-tuned layers need to be elaborately selected for best performance. In contrast, \algo\ consistently demonstrates superior performance.}
    \label{fig:partial}
\end{figure}

\textbf{Lightweight Fine-Tuning vs. Partial Fine-Tuning.}
Partial fine-tuning \cite{he2022masked} offers an intuitive approach to reduce the number of learned parameters. Specifically, it fine-tunes only the last $k$ layers while keeping the remaining frozen. In \Cref{fig:partial}, we compare partial fine-tuning and lightweight fine-tuning (\algo) on ImageNet-LT, Places-LT, and iNaturalist 2018. 
Similar to full fine-tuning, partial fine-tuning is also sensitive to the learning rate. For shallow fine-tuned layers (\eg, $k=0, 1, 2$), higher learning rates produce better performance. When fine-tuning more layers (\eg, $k=9, 12$), high learning rates adversely lead to severe accuracy degradation. 
Moreover, identifying the optimal number of fine-tuned layers presents another non-trivial challenge, even if using the ideal learning rate, as the optimal $k$ varies dramatically across datasets: $1$ for ImageNet-LT and Places-LT, and $6$ for iNaturalist 2018. Remarkably, \algo\ maintains strong performance across all datasets without requiring hyperparameter adjustments.

\section{Conclusion and Limitation}
This paper investigates long-tail learning with the foundation model. We first identify the critical limitation of heavy fine-tuning in distorting tail-class performance and discover that lightweight fine-tuning can effectively mitigate this issue. Based on this insight, we propose \algo, a versatile and inclusive framework tailored for long-tail learning. The proposed framework facilitates efficient and accurate fine-tuning, achieving state-of-the-art performance in fewer than 15 training epochs without relying on any external data, while consistently outperforming numerous baseline methods across a range of long-tail datasets, including ImageNet-LT, Places-LT, iNaturalist 2018, and CIFAR-100-LT. We emphasize the ease of training and hope that our approach serves as an inspiration for further advancements in the field of long-tail learning.

\textbf{Limitation}: \algo\ adopts semantic-aware initialization by harnessing semantic knowledge of CLIP. However, it faces challenges when applied to vision-only foundation models. In such cases, we opt for class mean features as a practical alternative. Fortunately, the results on ImageNet-21K pre-trained ViT indicate that class mean features serve as a viable substitute. As delineated in \Cref{table:comp_clf_init}, this approach yields notable improvements compared to other initialization methods; however, its efficacy on tail classes remains inferior to that of semantic-aware strategies. It remains an intriguing challenge regarding how to exploit available information for classifier initialization in visual-only foundation models. We identify this as a promising direction for future research.





\ifCLASSOPTIONcaptionsoff
  \newpage
\fi



\normalem
\bibliographystyle{IEEEtran}
\bibliography{reference}

%

\begin{IEEEbiography}[{\includegraphics[width=1in,height=1.25in,clip,keepaspectratio]{photos/shijx-photo.jpg}}]{Jiang-Xin Shi}
received the BSc degree in 2020. He is currently working toward the PhD degree at the School of Artificial Intelligence, the National Key Laboratory for Novel Software Technology at Nanjing University, China. His research interests focus on long-tail learning. He served as the Program Committee Member for top-tier conferences, e.g., ICML/NeurIPS/ICLR/KDD/AAAI, and the Senior Program Committee Member for IJCAI 2025.
\end{IEEEbiography}

\begin{IEEEbiography}[{\includegraphics[width=1in,height=1.25in,clip,keepaspectratio]{photos/weit-photo.jpg}}]{Tong Wei}
received his Ph.D. in Computer Science from Nanjing University, China, in 2021. He is currently an associate professor at the School of Computer Science and Engineering at Southeast University. His research focuses on trustworthy machine learning, and he has published over 30 papers in top-tier conferences and journals. Dr. Wei has also served as an Area Chair for premier conferences, including ICML'25, NeurIPS'25, and IJCAI'25.
\end{IEEEbiography}

\begin{IEEEbiography}[{\includegraphics[width=1in,height=1.25in,clip,keepaspectratio]{photos/liyf-photo.jpg}}]{Yu-Feng Li}
(Senior Member, IEEE) received the BSc and PhD degrees in computer science from Nanjing University, China in 2006 and 2013, respectively. He is currently a full professor at the School of Artificial Intelligence, the National Key Laboratory for Novel Software Technology at Nanjing University. He is a member of the LAMDA group, led by Prof. Zhi-Hua Zhou (IEEE/ACM/AAAI Fellow). His research interests focus on robust and reliable machine learning. He published over 90 academic papers in top-tier journals and conferences in the field, with around 7000 citations. He serves as journal associate/action editor for Artificial Intelligence, Machine Learning, Neural Networks, etc. He served as program co-chair for IEEE Big Comp 2020/CCML 2021, and area chairs for top-tier conferences, e.g., ICML/NeurIPS/ICLR/IJCAI. The research work has been selected for the IJCAI 2021 Early-Career Spotlight Talk. He won the PAKDD Early-Career Research Award 2024. He is the associated program co-chair for IJCAI 2025.
\end{IEEEbiography}




\end{document}